\newtheorem{proposition}{Proposition}
\newcommand{\argmin}{\operatornamewithlimits{argmin}}
\newcommand{\argmax}{\operatornamewithlimits{argmax}}
\newenvironment{shrinkeq}[1]
{\bgroup
  \addtolength\abovedisplayshortskip{#1}
  \addtolength\abovedisplayskip{#1}
  \addtolength\belowdisplayshortskip{#1}
  \addtolength\belowdisplayskip{#1}}
{\egroup\ignorespacesafterend}
\title{Regularized Anderson Acceleration for Off-Policy Deep Reinforcement Learning}
\author{%
  Wenjie Shi,~~Shiji Song,~~Hui Wu,~~Ya-Chu Hsu,~~Cheng Wu,~~Gao Huang\thanks{Corresponding author.}\\
  Department of Automation, Tsinghua University, Beijing, China \\
  Beijing National Research Center for Information Science and Technology (BNRist) \\
  \texttt{\{shiwj16, wuhui14, xuyz17\}@mails.tsinghua.edu.cn} \\
  \texttt{\{shijis, wuc, gaohuang\}@tsinghua.edu.cn} \\
}
\begin{document}

\maketitle

\begin{abstract}\vspace{-1.0ex}
 Model-free deep reinforcement learning (RL) algorithms have been widely used for a range of complex control tasks. However, slow convergence and sample inefficiency remain challenging problems in RL, especially when handling continuous and high-dimensional state spaces. To tackle this problem, we propose a general acceleration method for model-free, off-policy deep RL algorithms by drawing the idea underlying regularized Anderson acceleration (RAA), which is an effective approach to accelerating the solving of fixed point problems with perturbations. Specifically, we first explain how policy iteration can be applied directly with Anderson acceleration. Then we extend RAA to the case of deep RL by introducing a regularization term to control the impact of perturbation induced by function approximation errors. We further propose two strategies, i.e., progressive update and adaptive restart, to enhance the performance. The effectiveness of our method is evaluated on a variety of benchmark tasks, including Atari 2600 and MuJoCo. Experimental results show that our approach substantially improves both the learning speed and final performance of state-of-the-art deep RL algorithms. The code and models are available at: https://github.com/shiwj16/raa-drl.
\end{abstract}

\section{Introduction}\vspace{-1.0ex}
 Reinforcement learning (RL) is a principled mathematical framework for experience-based autonomous learning of policies. In recent years, model-free deep RL algorithms have been applied in a variety of challenging domains, from game playing \cite{mnih2015human, silver2016mastering} to robot navigation \cite{shi2018multi, mirowski2016learning}. However, sample inefficiency, i.e., the required number of interactions with the environment is impractically high, remains a major limitation of current RL algorithms for problems with continuous and high-dimensional state spaces. For example, many RL approaches on tasks with low-dimensional state spaces and fairly benign dynamics may even require thousands of trials to learn. Sample inefficiency makes learning in real physical systems impractical and severely prohibits the applicability of RL approaches in more challenging scenarios.

\vspace{-0.1cm}
 A promising way to improve the sample efficiency of RL is to learn models of the underlying system dynamics. However, learning models of the underlying transition dynamics is difficult and inevitably leads to modelling errors. Alternatively, off-policy algorithms such as deep Q-learning (DQN) \cite{mnih2015human} and its variants \cite{wang2015dueling, van2016deep}, deep deterministic policy gradient (DDPG) \cite{lillicrap2015continuous}, soft actor-critic (SAC) \cite{haarnoja2018soft, shi2019soft} and off-policy hierarchical RL \cite{nachum2018data}, which instead aim to reuse past experience, are commonly used to alleviate the sample inefficiency problem. Unfortunately, off-policy algorithms are typically based on policy iteration or value iteration, which repeatedly apply the Bellman operator of interest and generally require an infinite number of iterations to converge exactly to the optima. Moreover, the Bellman iteration constructs a contraction mapping which converges asymptotically to the optimal value function \cite{bertsekas1996neuro}. Iterating this mapping essentially results in a fixed-point problem \cite{granas2013fixed} and thus may be unacceptably slow to converge. These issues are further exacerbated when nonlinear function approximator such as neural network is utilized or the tasks have continuous state and action spaces.

\vspace{-0.08cm}
 This paper explores how to accelerate the convergence or improve the sample efficiency for model-free, off-policy deep RL. We make the observation that RL is closely linked to fixed-point iteration: the optimal policy can be found by solving a fixed-point problem of associated Bellman operator. Therefore, we attempt to embrace the idea underlying Anderson acceleration (also known as Anderson mixing, Pulay mixing) \cite{walker2011anderson, toth2015convergence}, which is a method capable of speeding up the computation of fixed-point iterations. While the classic fixed-point iteration repeatedly applies the operator to the last estimate, Anderson acceleration searches for the optimal point that has minimal residual within the subspace spanned by several previous estimates, and then applies the operator to this optimal estimate. Prior work \cite{geist2018anderson} has successfully applied Anderson acceleration to value iteration and preliminary experiments show a significant speed up of convergence. However, existing application is only feasible on simple tasks with low-dimensional, discrete state and action spaces. Besides, as far as we know, Anderson acceleration has never been applied to deep RL due to some long-standing issues including biases induced by sampling a minibatch and function approximation errors.

\vspace{-0.08cm}
 In this paper, Anderson acceleration is first applied to policy iteration under a tabular setting. Then, we propose a practical acceleration method for model-free, off-policy deep RL algorithms based on regularized Anderson acceleration (RAA) \cite{scieur2016regularized}, which is a general paradigm with a Tikhonov regularization term to control the impact of perturbations. The structure of perturbations could be the noise injected from the outside and high-order error terms induced by a nonlinear fixed-point iteration function. In the context of deep RL, function approximation errors are major perturbation source for RAA. We present two bounds to characterize how the regularization term controls the impact of function approximation errors. Two strategies, i.e., progressive update and adaptive restart, are further proposed to enhance the performance. Moreover, our acceleration method can be implemented readily to deep RL algorithms including Dueling-DQN \cite{wang2015dueling} and twin delayed DDPG (TD3) \cite{fujimoto2018addressing} to solve very complex, high-dimensional tasks, such as Atari 2600 and MuJoCo \cite{todorov2012mujoco} benchmarks. Finally, the empirical results show that our approach exhibits a substantial improvement in both learning speed and final performance over vanilla deep RL algorithms.

\vspace{-1ex}
\section{Related Work}\label{sec:related work}\vspace{-1ex}
 Prior works have made a number of efforts to improve the sample efficiency and speed up the convergence of deep RL from different respects, such as variance reduction \cite{greensmith2004variance, schulman2015high}, model-based RL \cite{deisenroth2011pilco, williams2017information, buckman2018sample}, guided exploration \cite{levine2014learning, chebotar2017path}, etc. One of the most widely used techniques is off-policy RL, which combines temporal difference \cite{sutton1988learning}  and experience replay \cite{lin1993reinforcement, wang2016sample} so as to make use of all the previous samples before each update to the policy parameters. Though introducing biases by using previous samples, off-policy RL alleviates the high variance in estimation of Q-value and policy gradient \cite{gu2016q}. Consequently, fast convergence is rendered when under fine parameter-tuning.

\vspace{-0.08cm}
 As one kernel technique of off-policy RL, temporal difference is derived from the Bellman iteration which can be regarded as a fixed-point problem \cite{granas2013fixed}. Our work focuses on speeding up the convergence of off-policy RL via speeding up the convergence of the eseential fixed-point problem, and replying on a technique namely Anderson acceleration. This method is exploited by prior work \cite{walker2011anderson, henderson2019damped} to accelerate the fixed-point iteration by computing the new iteration as linear combination of previous evaluations. In the linear case, the convergence rate of Anderson acceleration has been elaborately analyzed and proved to be equal to or better than fixed-point iteration in \cite{toth2015convergence}. For nonlinear fixed-point iteration, regularized Anderson acceleration is proposed by \cite{scieur2016regularized} to constrain the norm of coefficient vector and reduce the impact of perturbations. Recent works \cite{geist2018anderson, xie2018interpolatron} have applied the Anderson acceleration to value iteration and deep neural network, and preliminary experiments show that a significant speedup of convergence is achieved. However, there is still no research showing its acceleration effect on deep RL for complex high-dimensional problems, as far as we know.

\vspace{-1ex}
\section{Preliminaries}\label{sec:background}\vspace{-1ex}
 Under RL paradigm, the interaction between an agent and the environment is described as a Markov Decision Process (MDP). Specifically, at a discrete timestamp $t$, the agent takes an action $a_t$ in a state $s_t$ and transits to a subsequent state $s_{t+1}$ while obtaining a reward $r_t=r(s_t, a_t)$ from the environment. The transition between states satisfies the Markov property, i.e., $P(s_{t+1}|s_t,a_t, \ldots, s_{0}, a_{0}) = P(s_{s+t}|s_t, a_t)$. Usually, the RL algorithm aims to search a policy $\pi(a|s)$ that maximizes the expected sum of discounted future rewards. Q-value function describes the expected return starting from a state-action pair $(s,a)$: $Q^\pi(s,a) = \mathbb{E}\left[\sum_{t=0}^{\infty}{\gamma^t r_{t+1}|s_0=s, a_0=a}\right]$, where the policy $\pi(a|s)$ is a function or conditional distribution mapping the state space $\mathcal{S}$ to the action space $\mathcal{A}$.


\subsection{Off-policy reinforcement learning}\vspace{-0.12cm}
 Most off-policy RL algorithms are derived from policy iteration, which alternates between policy evaluation and policy improvement to monotonically improve the policy and the value function until convergence. For complex environments with unknown dynamics and continuous spaces, policy iteration is generally combined with function approximation, and parameterized Q-value function (or critic) and policy function are learned from sampled interactions with environment. Since critic is represented as parameterized function instead of look-up table, the policy evaluation is replaced with an optimization problem which minimizes the squared temporal difference error, the discrepancy between the outputs of critics after and before applying the Bellman operator
\begin{shrinkeq}{-0.5ex}
 \begin{align}\label{eqn:td_error}
  L(\theta) = \mathbb{E} \left[((\mathcal{T} Q_{\theta'})(s,a) - Q_\theta(s, a))^2\right],
 \end{align}
\end{shrinkeq}
 where typically the Bellman operator is applied to a separate target value network $Q_{\theta'}$ whose parameter is periodically replaced or softly updated with copy of current Q-network weight.

\vspace{-0.08cm}
 In off-policy RL field, prior works have proposed a number of modifications on the Bellman operator to alleviate the overestimation or function approximation error problems and thus achieved significant improvement. Similar to policy improvement, DQN replaces the current policy with a greedy policy for the next state in the Bellman operator
\begin{shrinkeq}{-1ex}
 \begin{align}
  (\mathcal{T} Q_{\theta'})(s_t,a_t) = \mathbb{E}_{s_{t+1},r_t}\big[r(s_t,a_t) + \gamma \max_{a} Q_{\theta'}(s_{t+1}, a)\big].
 \end{align}
\end{shrinkeq}
 As the state-of-the-art actor-critic algorithm for continuous control, TD3 \cite{fujimoto2018addressing} proposes a clipped double Q-learning variant and a target policy smoothing regularization to modify the Bellman operator, which alleviates overestimation and overfitting problems,
\begin{shrinkeq}{-1ex}
 \begin{align}\label{eqn:td3-bellman}
   (\mathcal{T} Q_{\theta'})(s_t,a_t) = \mathbb{E}_{s_{t+1},r_t}\big[r(s_t,a_t) + \gamma \min_{j=1,2} Q_{\theta^{\prime}_j}(s_{t+1}, \pi_{\phi'}(s_{t+1})+\epsilon)\big],
 \end{align}
\end{shrinkeq}
 where $Q_{\theta_j}(s, a) (j=1,2)$ denote two critics with decoupled parameters $\theta_j$. The added noise $\epsilon\sim {\rm clip}(\mathcal{N}(0,\sigma), -c, c)$ is clipped by the positive constant $c$.

\subsection{Anderson acceleration for value iteration}\vspace{-1.2ex}
 Most RL algorithms are derived from a fundamental framework named policy iteration which consists of two phases, i.e. policy evaluation and policy improvement. The policy evaluation estimates the Q-value function induced by current policy by iterating a Bellman operator from an initial estimate. Following the policy evaluation, the policy improvement acquires a better policy from a greedy strategy, The policy iteration alternates two phases to update the Q-value and the policy respectively until convergence. As a special variant of policy iteration, value iteration merges policy evaluation and policy improvement into one iteration
\begin{shrinkeq}{-0.8ex}
 \begin{align}
  V_{k+1}(s) \leftarrow (\mathcal{T}V_k)(s) = \max_{a}\mathbb{E}_{s\prime,r}\left[r+\gamma V_k(s^\prime)\right], \forall s\in \mathcal{S},
 \end{align}
\end{shrinkeq}
 and iterates it until convergence from a initial $V_0$, where the Bellman operation is only repeatedly applied to the last estimate. Anderson acceleration is a widely used technique to speed up the convergence of fixed-point iterations and has been successfully applied to speed up value iteration \cite{geist2018anderson} by linearly combining previous $m~(m>1)$ value estimates,
\begin{shrinkeq}{-1ex}
 \begin{align}
  V_{k+1} \leftarrow \sum_{i=1}^m{\alpha_i^k\mathcal{T}V_{k-m+i}},
 \end{align}
\end{shrinkeq}
 where the coefficient vector $\alpha^k \in \mathbb{R}^{m}$ is determined by minimizing the norm of total Bellman residuals of these estimates,
\begin{shrinkeq}{-1ex}
 \begin{align}
  \alpha^k = \argmin_{\alpha\in \mathbb{R}^{m}} \left\| \sum_{i=1}^m \alpha_i (\mathcal{T} V_{k-m+i} - V_{k-m+i})\right\|,     ~~\textrm{s.t.}~\sum_{i=1}^{m}\alpha_i=1.
 \end{align}
\end{shrinkeq}
 For the $\ell_2\textrm{-norm}$, the minimum can be analytically solved by using the Karush-Kuhn-Tucker conditions. Corresponding coefficient vector is given by
\begin{shrinkeq}{-1ex}
 \begin{align}\label{eqn:solution}
  \mathbf{\alpha}^k =
          \frac{(\Delta_k^T\Delta_k)^{-1}\mathbf{1}}
          {\mathbf{1}^T(\Delta_k^T\Delta_k)^{-1}\mathbf{1}},
 \end{align}
\end{shrinkeq}
 where $\Delta_k = [\delta_{k-m+1}, \ldots \delta_k] \in \mathbb{R}^{|\mathcal{S}|\times m}$ is a Bellman residuals matrix with $\delta_i = \mathcal{T}V_i - V_i \in \mathbb{R}^{|\mathcal{S}|}$, and $\mathbf{1} \in \mathbb{R}^{m}$ denotes the vector with all components equal to one \cite{geist2018anderson}.

\section{Regularized Anderson Acceleration for Deep Reinforcement Learning}\label{sec:algorithm}
\vspace{-1ex}
 Our regularized Anderson acceleration (RAA) method for deep RL can be derived starting from a direct implementation of Anderson acceleration to the classic policy iteration algorithm. We will first present this derivation to show that the resulting algorithm converges faster to the optimal policy than the vanilla form. Then, a regularized variant is proposed for a more general case with function approximation. Based on this theory, a progressive and practical acceleration method with adaptive restart is presented for off-policy deep RL algorithms.

\subsection{Anderson acceleration for policy iteration}\vspace{-1ex}
 As described above, Anderson acceleration can be directly applied to value iteration. However, policy iteration is more fundamental and suitable to scale to deep RL, compared to value iteration. Unfortunately, the implementation of Anderson acceleration is complicated when considering policy iteration, because there is no explicit fixed-point mapping between the policies in any two consecutive steps, which make it impossible to straightforwardly apply Anderson acceleration to the policy $\pi$.

\vspace{-0.1cm}
 Due to the one-to-one mapping between policies and Q-value functions, policy iteration can be accelerated by applying Anderson acceleration to the policy improvement, which establishes a mapping from the current Q-value estimate to the next policy. In this section, our derivation is based on a tabular setting, to enable theoretical analysis. Specifically, for the prototype policy iteration, suppose that estimates have been computed up to iteration $k$, and that in addition to the current estimate $Q^{\pi_k}$, the $m-1$ previous estimates $Q^{\pi_{k-1}},...,Q^{\pi_{k-m+1}}$ are also known. Then, a linear combination of estimates $Q^{\pi_i}$ with coefficients $\alpha_i$ \footnote{Notice that we don't impose a positivity condition on the coefficients.} reads
\begin{shrinkeq}{-1.5ex}
 \begin{align}\label{eqn:linear combination}
  Q_{\mathbf{\alpha}}^k = \sum_{i=1}^{m}\alpha_i Q^{\pi_{k-m+i}} ~\textrm{with}~ \sum_{i=1}^{m}\alpha_i=1.
 \end{align}
\end{shrinkeq}
 Due to this equality constraint, we define \emph{combined Bellman operator} $\mathcal{T}_c$ as follows
\begin{shrinkeq}{-1.5ex}
 \begin{align}
  \mathcal{T}_c Q_{\mathbf{\alpha}}^k =
        \sum_{i=1}^{m}\alpha_i \mathcal{T} Q^{\pi_{k-m+i}}.
 \end{align}
\end{shrinkeq}
 Then, one searches a coefficient vector $\mathbf{\alpha}^k$ that minimizes the following objective function $J$ defined as the combined Bellman residuals among the entire state-action space $\mathcal{S}\times \mathcal{A}$,
\begin{shrinkeq}{-1.3ex}
 \begin{align}\label{eqn:objective}
  \mathbf{\alpha}^k = \argmin_{\mathbf{\alpha}\in\mathbb{R}^m} J(\mathbf{\alpha}) = \argmin_{\mathbf{\alpha}\in\mathbb{R}^m}\left\|\sum_{i=1}^{m}\alpha_i (\mathcal{T} Q^{\pi_{k-m+i}}-Q^{\pi_{k-m+i}})\right\|,       ~~\textrm{s.t.}~\sum_{i=1}^{m}\alpha_i=1.
 \end{align}
\end{shrinkeq}
 In this paper, we will consider the $\ell_2\textrm{-norm}$, although a different norm may also be feasible (for example $\ell_1$ and $\ell_{\infty}$, in which case the optimization problem becomes a linear program). The solution to this optimization problem is identical to (\ref{eqn:solution}) except that $\Delta_k=[\delta_{k-m+1},...,\delta_k]\in\mathbb{R}^{|\mathcal{S}\times \mathcal{A}|\times m}$ with $\delta_i=\mathcal{T} Q^{\pi_i}-Q^{\pi_i}\in \mathbb{R}^{|\mathcal{S}\times \mathcal{A}|}$. Detailed derivation can be found in Appendix \ref{app:solution} of the supplementary material. Then, the new policy improvement steps are given by
\begin{shrinkeq}{-1.5ex}
 \begin{align}\label{eqn:new policy improvement step}
  \pi_{k+1}(s) = \argmax\limits_a Q_{\mathbf{\alpha}}^k(s, a) = \argmax\limits_a \sum_{i=1}^{m}\alpha_i^k Q^{\pi_{k-m+1}}(s, a), \forall s\in \mathcal{S}.
 \end{align}
\end{shrinkeq}
 Meanwhile, Q-value estimate $Q^{\pi_{k+1}}$ can be obtained by iteratively applying the following policy evaluation operator by starting from some initial function $Q_0$,
\begin{shrinkeq}{-0.5ex}
 \begin{align}\label{eqn:policy evaluation step}
  Q_i(s, a) \leftarrow
  \mathbb{E}_{s\prime,r}\left[r+\gamma \mathbb{E}_{a\prime\sim\pi_{k+1}} [Q_{i-1}(s^\prime, a^\prime)]\right], \forall (s,a)\in (\mathcal{S},\mathcal{A}).
 \end{align}
\end{shrinkeq}
 In fact, the effect of acceleration can be explained intuitively. The linear combination $Q_{\mathbf{\alpha}}^k$ is a better estimate of Q-value than the last one $Q^{\pi_k}$ in terms of combined Bellman residuals. Accordingly, the policy is improved from a better policy baseline corresponding to the better estimate of Q-value.

\subsection{Regularized variant with function approximation}
 For RL control tasks with continuous state and action spaces, or high-dimensional state space, we generally consider the case in which Q-value function is approximated by a parameterized function approximator. If the approximation is sufficiently good, it might be appropriate to use it in place of $Q^\pi$ in (\ref{eqn:linear combination})-(\ref{eqn:policy evaluation step}). However, there are several key challenges when implementing Anderson acceleration with function approximation.

 First, notice that the Bellman residuals in (\ref{eqn:objective}) are calculated among the entire state-action space. Unfortunately, sweeping entire state-action space is intractable for continuous RL, and a fine grained discretization will lead to the curse of dimensionality. A feasible alternative to avoid this issue is to use a sampled Bellman residuals matrix $\widetilde{\Delta}_k$ instead. To alleviate the bias induced by sampling a minibatch, we adopt a large sample size $N_A$ specifically for Anderson acceleration.

 Second, function approximation errors are unavoidable and lead to biased solution of Anderson acceleration. The intricacies of this issue will be exacerbated by deep models. Therefore, function approximation errors will induce severe perturbation when implementing Anderson acceleration to policy iteration with function approximation. In addition to the perturbation, the solution (\ref{eqn:solution}) contains the inverse of a squared Bellman residuals matrix, which may suffer from ill-conditioning when the squared Bellman residuals matrix is rank-deficient, and this is a major source of numerical instability in vanilla Anderson acceleration. In other words, even if the perturbation is small, its impact on the solution can be arbitrarily large.

 Under the above observations, we scale the idea underlying RAA to the policy iteration with function approximation in this section. Then, the coefficient vector (\ref{eqn:objective}) is now adjusted to $\widetilde{\mathbf{\alpha}}^k$ that minimizes the perturbed objective function added with a Tikhonov regularization term,
\begin{shrinkeq}{-1.0ex}
 \begin{align}\label{eqn:regularized objective}
  \widetilde{\mathbf{\alpha}}^k = \argmin_{\mathbf{\alpha}\in\mathbb{R}^m} \left\|\sum_{i=1}^{m}\alpha_i (\mathcal{T} Q^{\pi_{k-m+i}}-Q^{\pi_{k-m+i}} +e_{k-m+i})\right\| + \lambda\left\|\alpha\right\|^2,
  ~~\textrm{s.t.}~\sum_{i=1}^{m}\alpha_i=1,
 \end{align}
\end{shrinkeq}
 where $e_{k-m+i}$ represents the perturbation induced by function approximation errors. The solution to this regularized optimization problem can be obtained analytically similar to (\ref{eqn:objective}),
\begin{shrinkeq}{-1.0ex}
 \begin{align}\label{eqn:regularized alpha}
  \widetilde{\mathbf{\alpha}}^k =
          \frac{(\widetilde{\Delta}_k^T\widetilde{\Delta}_k+\lambda I)^{-1}\mathbf{1}}
          {\mathbf{1}^T(\widetilde{\Delta}_k^T\widetilde{\Delta}_k+\lambda I)^{-1}\mathbf{1}},
 \end{align}
\end{shrinkeq}
 where $\lambda$ is a positive scalar representing the scale of regularization. $\widetilde{\Delta}_k=[\widetilde{\delta}_{k-m+1},...,\widetilde{\delta}_k]\in\mathbb{R}^{N_A\times m}$ is the sampled Bellman residuals matrix with $\widetilde{\delta}_i=\mathcal{T} Q^{\pi_i}-Q^{\pi_i}+e_i\in\mathbb{R}^{N_A}$.

 In fact, the regularization term controls the norm of coefficient vector produced by RAA and reduces the impact of perturbation induced by function approximation errors, as shown analytically by the following proposition.

\begin{proposition}\label{pro:bound}
 Consider two identical policy iterations $\mathcal{I}_1$ and $\mathcal{I}_2$ with function approximation. $\mathcal{I}_2$ is implemented with regularized Anderson acceleration and takes into account approximation errors, whereas $\mathcal{I}_1$ is only implemented with vanilla Anderson acceleration. Let $\mathbf{\alpha}^k$ and $\mathbf{\widetilde{\alpha}}^k$ be the coefficient vectors of $\mathcal{I}_1$ and $\mathcal{I}_2$ respectively. Then, we have the following bounds
\begin{shrinkeq}{-0.1ex}
 \begin{align}
  \|\widetilde{\mathbf{\alpha}}^k\| \leq \sqrt{\frac{\lambda+\|\widetilde{\Delta}_k\|^2}{m\lambda}},~~~~
  \|\widetilde{\mathbf{\alpha}}^k-\mathbf{\alpha}^k\| \leq
                        \frac{\|\widetilde{\Delta}_k^T\widetilde{\Delta}_k-\Delta_k^T\Delta_k\|+\lambda}
                        {\lambda}\|\mathbf{\alpha}^k\|.
 \end{align}
\end{shrinkeq}
\end{proposition}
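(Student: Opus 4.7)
The plan is to handle the two inequalities separately. For the first bound, I view $\widetilde{\alpha}^k$ as the minimizer of the regularized quadratic $f_\lambda(\alpha) := \|\widetilde{\Delta}_k\alpha\|^2 + \lambda\|\alpha\|^2$ over the affine set $\{\alpha\in\mathbb{R}^m:\mathbf{1}^T\alpha=1\}$. Evaluating $f_\lambda$ at the uniform feasible point $\alpha_0 = \mathbf{1}/m$ gives $\|\alpha_0\|^2 = 1/m$ and $\|\widetilde{\Delta}_k\alpha_0\|^2 \le \|\widetilde{\Delta}_k\|^2/m$. Discarding the non-negative term $\|\widetilde{\Delta}_k\widetilde{\alpha}^k\|^2$ on the left, the minimality inequality $f_\lambda(\widetilde{\alpha}^k) \le f_\lambda(\alpha_0)$ becomes $\lambda\|\widetilde{\alpha}^k\|^2 \le (\|\widetilde{\Delta}_k\|^2 + \lambda)/m$, and a square root yields the first bound.

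For the second bound I set $A := \Delta_k^T\Delta_k$, $F := \widetilde{\Delta}_k^T\widetilde{\Delta}_k - \Delta_k^T\Delta_k$, and $\widetilde{A} := \widetilde{\Delta}_k^T\widetilde{\Delta}_k + \lambda I = A + F + \lambda I$. The KKT conditions for the two constrained least-squares problems (the same conditions already used to derive (\ref{eqn:solution}) and (\ref{eqn:regularized alpha})) produce Lagrange multipliers $c,\widetilde{c}$ with $A\alpha^k = c\mathbf{1}$ and $\widetilde{A}\widetilde{\alpha}^k = \widetilde{c}\mathbf{1}$. Substituting $\widetilde{A} = A + (F + \lambda I)$ into the second relation and subtracting the first yields the key identity
\[
\widetilde{A}(\widetilde{\alpha}^k - \alpha^k) = (\widetilde{c} - c)\mathbf{1} - (F + \lambda I)\alpha^k.
\]
The decisive step is to eliminate the unknown multiplier difference. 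Because both $\alpha^k$ and $\widetilde{\alpha}^k$ satisfy $\mathbf{1}^T\alpha = 1$, their difference is orthogonal to $\mathbf{1}$, so left-multiplying the identity by $(\widetilde{\alpha}^k - \alpha^k)^T$ annihilates the $(\widetilde{c}-c)\mathbf{1}$ term and leaves a scalar equality between $(\widetilde{\alpha}^k - \alpha^k)^T\widetilde{A}(\widetilde{\alpha}^k - \alpha^k)$ and $-(\widetilde{\alpha}^k - \alpha^k)^T(F + \lambda I)\alpha^k$. Bounding the left-hand side from below by $\lambda\|\widetilde{\alpha}^k - \alpha^k\|^2$ via $\widetilde{A}\succeq \lambda I$, and the right-hand side from above by Cauchy--Schwarz together with $\|F + \lambda I\| \le \|F\| + \lambda$, produces $\lambda\|\widetilde{\alpha}^k - \alpha^k\|^2 \le (\|F\|+\lambda)\|\widetilde{\alpha}^k - \alpha^k\|\,\|\alpha^k\|$; dividing through gives the stated inequality.

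I expect the only genuine obstacle to be the treatment of $\widetilde{c} - c$. A direct perturbation attack that simply inverts $\widetilde{A}$ in the identity above leaves this term in the bound and then demands a separate, coarser estimate introducing a spurious $\sqrt{m}$ factor and an extra $\|\widetilde{A}^{-1}\|$-type sensitivity constant. The energy-norm trick of testing against $\widetilde{\alpha}^k - \alpha^k$, which uses the single equality constraint exactly once, is what collapses the proof to the elementary combination of $\widetilde{A}\succeq\lambda I$ and Cauchy--Schwarz and delivers the clean factor $(\|F\|+\lambda)/\lambda$.
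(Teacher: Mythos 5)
Your proof is correct, and both halves take a genuinely different route from the paper's. For the first bound, the paper starts from the closed form (\ref{eqn:regularized alpha}) and bounds the Rayleigh-quotient expression $\mathbf{1}^T B^{-2}\mathbf{1}/(\mathbf{1}^T B^{-1}\mathbf{1})^2$ with $B=\widetilde{\Delta}_k^T\widetilde{\Delta}_k+\lambda I$ via operator-norm estimates on $B^{\pm 1/2}$; you instead compare the minimizer against the uniform feasible point $\mathbf{1}/m$ and drop the nonnegative data term, which is shorter and reaches the identical constant. (Note you are implicitly minimizing the squared residual norm plus $\lambda\|\alpha\|^2$; the paper writes the unsquared norm in (\ref{eqn:regularized objective}) but its closed form and KKT system correspond to the squared version, so your formulation is the consistent one.) For the second bound, the paper subtracts the two KKT block systems and explicitly inverts the block matrix, producing the factor $I-\bigl(B^{-1}\mathbf{1}\mathbf{1}^T\bigr)/\bigl(\mathbf{1}^T B^{-1}\mathbf{1}\bigr)$ whose operator norm it then bounds by $1$; since this is an oblique projector, that norm is in general $\geq 1$ rather than $\leq 1$, so that step of the paper is delicate. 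Your energy-norm argument --- testing the perturbed KKT identity against $\widetilde{\alpha}^k-\alpha^k$, which lies in the kernel of $\mathbf{1}^T$ and so annihilates the multiplier difference --- avoids the block inversion and the projector entirely, needing only $\widetilde{A}\succeq\lambda I$ and Cauchy--Schwarz, and is therefore arguably the more robust derivation of the same constant $(\|F\|+\lambda)/\lambda$. The only cosmetic omission is the trivial case $\widetilde{\alpha}^k=\alpha^k$ before dividing by $\|\widetilde{\alpha}^k-\alpha^k\|$.
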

\begin{proof}
 See Appendix \ref{app:proposition} of the supplementary material.
\end{proof}

 From the above bounds, we can observe that regularization allows a better control of the impact of function approximation errors, but also causes an inevitable gap between $\mathbf{\widetilde{\alpha}}^k$ and $\mathbf{\alpha}^k$. Qualitatively, large regularization scale $\lambda$ means less impact of function approximation errors. On the other hand, overlarge $\lambda$ leads to very small norm of coefficient vector $\widetilde{\mathbf{\alpha}}^k$, which means the coefficients for previous estimates is nearly identical. However, according to (\ref{eqn:objective}), equal coefficients are probably far away from the optima $\mathbf{\alpha}^k$ and thus result in great performance loss of Anderson acceleration.

\subsection{Implementation on off-policy deep reinforcement learning}\vspace{-1ex}
 As discussed in last section, it is impossible to directly use policy iteration in very large continuous domains. To that end, most off-policy deep RL algorithms apply the mechanism underlying policy iteration to learn approximations to both the Q-value function and the policy. Instead of iterating policy evaluation and policy improvement to convergence, these off-policy algorithms alternate between optimizing two networks with stochastic gradient descent. For example, actor-critic method is a well-known implementation of this mechanism. In this section, we show that RAA for policy iteration can be readily extended to existing off-policy deep RL algorithms for both discrete and continuous control tasks, with only a few modifications to the update of critic.

\subsubsection{Regularized Anderson acceleration for actor-critic}\vspace{-1ex}
 Consider a parameterized Q-value function $Q_\theta(s_t,a_t)$ and a tractable policy $\pi_\phi(a_t|s_t)$, the parameters of these networks are $\theta$ and $\phi$. In the following, we first give the main results of RAA for actor-critic. Then, RAA is combined with Dueling-DQN and TD3 respectively.

\vspace{-0.1cm}
 Under the paradigm of off-policy deep RL (actor-critic), RAA variant of policy iteration (\ref{eqn:new policy improvement step})-(\ref{eqn:policy evaluation step}) degrades into the following Bellman equation
\begin{shrinkeq}{-1ex}
 \begin{align}\label{eqn:bellman equation with anderson acceleration}
  Q_\theta(s_t, a_t) = \mathbb{E}_{s_{t+1},r_t}\left[r_t+\gamma \sum_{i=1}^{m}\widetilde{\alpha}_i\max\limits_{a_{t+1}}Q_{\theta^i}(s_{t+1}, a_{t+1})\right],
 \end{align}
\end{shrinkeq}
 where $\theta^i$ is the parameters of target network before $i$ update steps. Furthermore, to mitigate the instability resulting from drastic update step of Anderson acceleration, the following progressive Bellman equation (or progressive update) with RAA is used practically,
\begin{shrinkeq}{-1ex}
 \begin{align}\label{eqn:progressive bellman equation with anderson acceleration}
  Q_\theta(s_t, a_t) = \beta\sum_{i=1}^{m}\widetilde{\alpha}_i Q_{\theta^i}(s_t, a_t) + (1-\beta)\mathbb{E}_{s_{t+1},r_t}\left[r_t+\gamma \sum_{i=1}^{m}\widetilde{\alpha}_i\max\limits_{a_{t+1}}Q_{\theta^i}(s_{t+1}, a_{t+1})\right],
 \end{align}
\end{shrinkeq}
 where $\beta$ is a small positive coefficient.

\vspace{-0.1cm}
 Generally, the loss function of critic is then formulated as the following squared consistency error of Bellman equation,
\begin{shrinkeq}{-1ex}
 \begin{align}\label{eqn:loss of critic}
  L_Q(\theta) = \mathbb{E}_{(s_t,a_t)\in \mathcal{D}}\left[(Q_\theta(s_t, a_t)-y_t)^2\right],
 \end{align}
\end{shrinkeq}
 where $\mathcal{D}$ is the distribution of previously sampled transitions, or a replay buffer. The target value of Q-value function or critic is represented by $y_t$.

 \setlength{\algomargin}{0.7em}
 \begin{algorithm}[!t]
  \caption{RAA-Dueling-DQN Algorithm}
  \label{alg:RAA-DQN}
  Initialize a critic network $Q_\theta$ with random parameters $\theta$\;
  Initialize $m$ target networks $\theta^i\leftarrow\theta$ $(i=1,...,m)$ and replay buffer $\mathcal{D}$\;
  Initialize restart checking period $T_{r}$ and maximum training steps $K$\;
  Set $k=0$, $c_1 = 1$, $\Delta_{min}=\inf$, $\Delta_{T_r}=0$\;
  \While{k < K}
  {
  Receive initial observation state $s_0$\;
  \For{$t = 1$ to $T$}
  {
      Set $k=k+1$, and $m_k=\min(c_k,m)$\;
      With probability $\varepsilon$ select a random action $a_t$, otherwise select $a_t=\argmax_a Q_\theta(s_t,a)$\;
      Execute $a_t$, receive $r_t$ and $s_{t+1}$, store transition $(s_t, a_t, r_t, s_{t+1})$ into $\mathcal{D}$\;
      Sample minibatch of transitions $(s, a, r, s^\prime)$ from $\mathcal{D}$\;
      Perform Anderson acceleration steps (\ref{eqn:regularized objective})-(\ref{eqn:regularized alpha}) and obtain $\widetilde{\mathbf{\alpha}}^k$, $\Delta_{T_r}=\Delta_{T_r}+\|\widetilde{\delta}_k\|^2_2$\;
      Update the critic by minimizing the loss function (\ref{eqn:loss of critic}) with $y_t$ equal to the RHS of (\ref{eqn:progressive bellman equation with anderson acceleration})\;
      Update target networks every $M$ steps: $\theta^i\leftarrow\theta^{i+1}$ $(i=1,...,m-1)$ and $\theta^m\leftarrow\theta$\;
      $c_{k+1}=c_k+1$\;
      \If{$k\mod T_r=0$}
      {
       $\Delta_{min}=\min(\Delta_{min}, \Delta_{T_r}$)\;
       \If{$\Delta_{T_r}>\Delta_{min}$}
       {
         $\Delta_{min}=\inf$, and $c_{k+1}=1$\;
       }
      }
    }
  }
 \end{algorithm}

\vspace{-0.25cm}
 \paragraph{RAA-Dueling-DQN.} Different from vanilla Dueling-DQN algorithm using general Bellman equation, we instead use progressive Bellman equation with RAA (\ref{eqn:progressive bellman equation with anderson acceleration}) to update the critic. That is, $y_t$ is the RHS of (\ref{eqn:progressive bellman equation with anderson acceleration}) for RAA-Dueling-DQN.

\vspace{-0.25cm}
 \paragraph{RAA-TD3.} For the case of TD3 where an actor and two critics are learned for deterministic policy and Q-value function respectively, the implementation of RAA is more complicated. Specifically, two critics $Q_{\theta_j}(j=1,2)$ are simultaneously trained with clipped double Q-learning. Then, the target values $y_{j,t}(j=1,2)$ for RAA-TD3 are given by
 \begin{shrinkeq}{-1ex}
  \begin{align}\label{eqn:target values for RAA-TD3}
   y_{j,t} = \beta\sum_{i=1}^{m}\widetilde{\alpha}_i \widehat{Q}_{\theta^i}(s_t, a_t) + (1-\beta)\mathbb{E}_{s_{t+1},r_t}\left[r_t+\gamma \sum_{i=1}^{m}\widetilde{\alpha}_i\widehat{Q}_{\theta^i}(s_{t+1}, \pi_{\phi^\prime}(s_{t+1}) + \epsilon)\right],
  \end{align}
 \end{shrinkeq}
 where $\widehat{Q}_{\theta^i}(s_t, a_t) = \min_{j=1,2}Q_{\theta^i_j}(s_t, a_t)$.

\subsubsection{Adaptive restart}\vspace{-0.5ex}
 The idea of restarting an algorithm is well known in the numerical analysis literature. Vanilla Anderson acceleration has shown substantial improvements by incorporating with periodic restarts \cite{henderson2019damped}, where one periodically starts the acceleration scheme anew by only using information from the most recent iteration. In this section, to alleviate the problem that deep RL is notoriously prone to be trapped in local optimum, we propose an adaptive restart strategy for our RAA method.

\vspace{-0.1cm}
 Among the training steps of actor-critic with RAA, periodic restart checking steps are enforced to clear the memory immediately before the iteration completely crashes. More explicitly, the iteration is restarted whenever the average squared residual of current period exceeds the average squared residual of last period. Complete description of RAA-Dueling-DQN is summarized in Algorithm \ref{alg:RAA-DQN}. And RAA-TD3 is given in Appendix \ref{app:RAA-TD3} of the supplementary material.

\vspace{-1ex}
\section{Experiments}\vspace{-1ex}
 In this section, we present our experimental results and discuss their implications. We first give a detailed description of the environments (Atari 2600 and MuJoCo) used to evaluate our methods.
 Then, we report results on both discrete and continuous control tasks. Finally, we provide an ablative analysis for the proposed methodology. All default hyperparameters used in these experiments are listed in Appendix \ref{app:hyperparameters} of the supplementary material.

 \begin{figure}[!t]
  \centering
\vspace{-2.0ex}
  \subfigure[\scriptsize Breakout]{
   \begin{minipage}[t]{0.22\linewidth}
    \centering
    \includegraphics[width=1.35in]{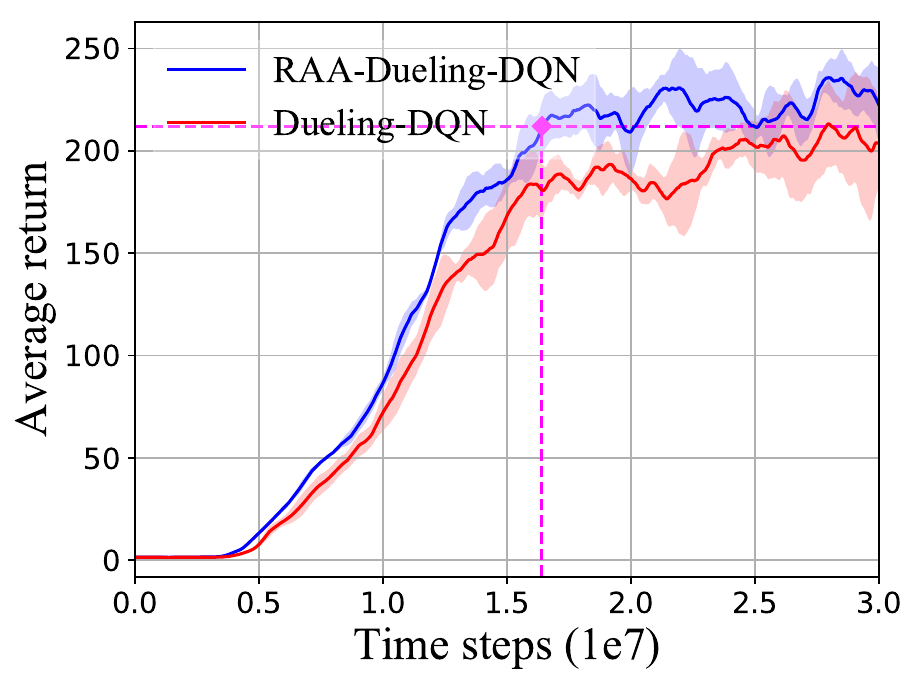}
   \end{minipage}
  }
  \hspace{0.05in}
  \subfigure[\scriptsize Enduro]{
   \begin{minipage}[t]{0.22\linewidth}
    \centering
    \includegraphics[width=1.28in]{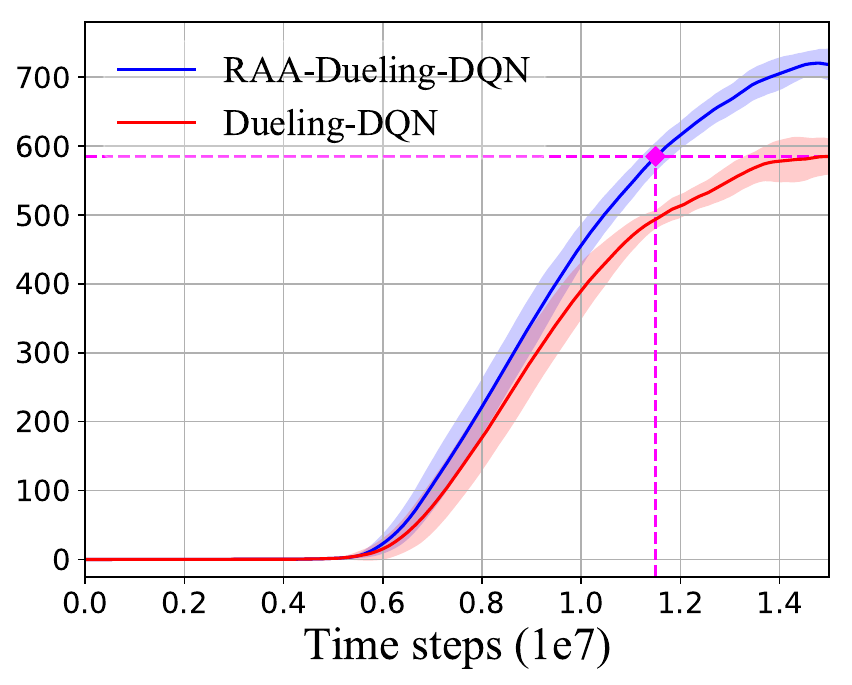}
   \end{minipage}
  }
  \subfigure[\scriptsize Qbert]{
   \begin{minipage}[t]{0.22\linewidth}
    \centering
    \includegraphics[width=1.31in]{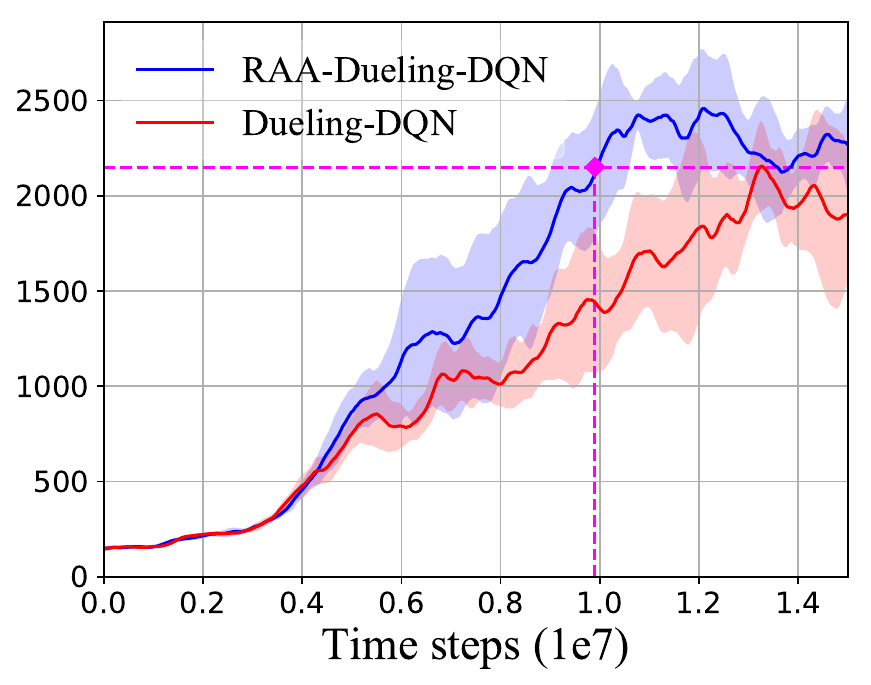}
   \end{minipage}
  }
  \subfigure[\scriptsize SpaceInvaders]{
   \begin{minipage}[t]{0.22\linewidth}
    \centering
    \includegraphics[width=1.28in]{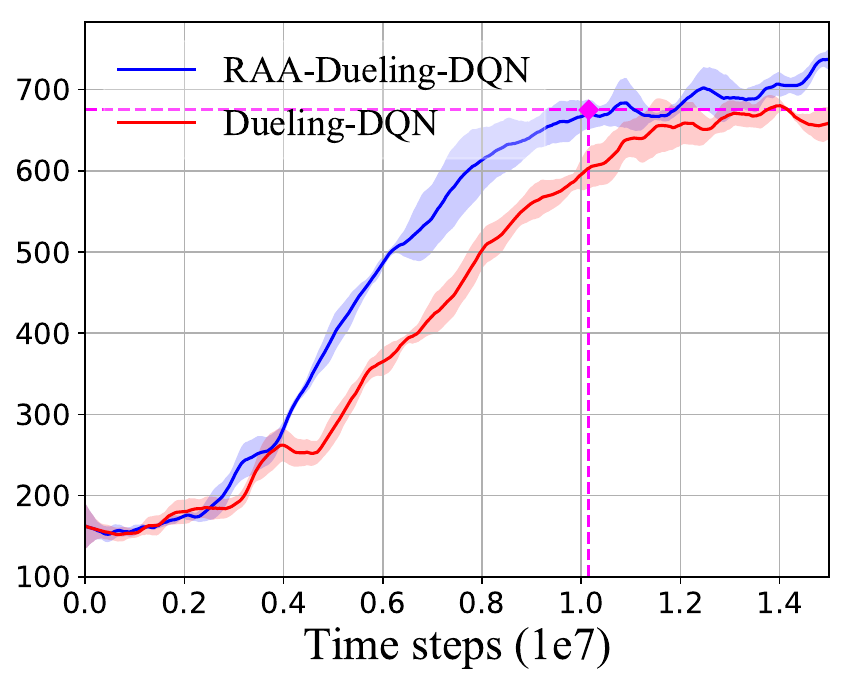}
   \end{minipage}
  }
\vspace{-1.0ex}
  \subfigure[\scriptsize Ant-v2]{
   \begin{minipage}[t]{0.22\linewidth}
    \centering
    \includegraphics[width=1.37in]{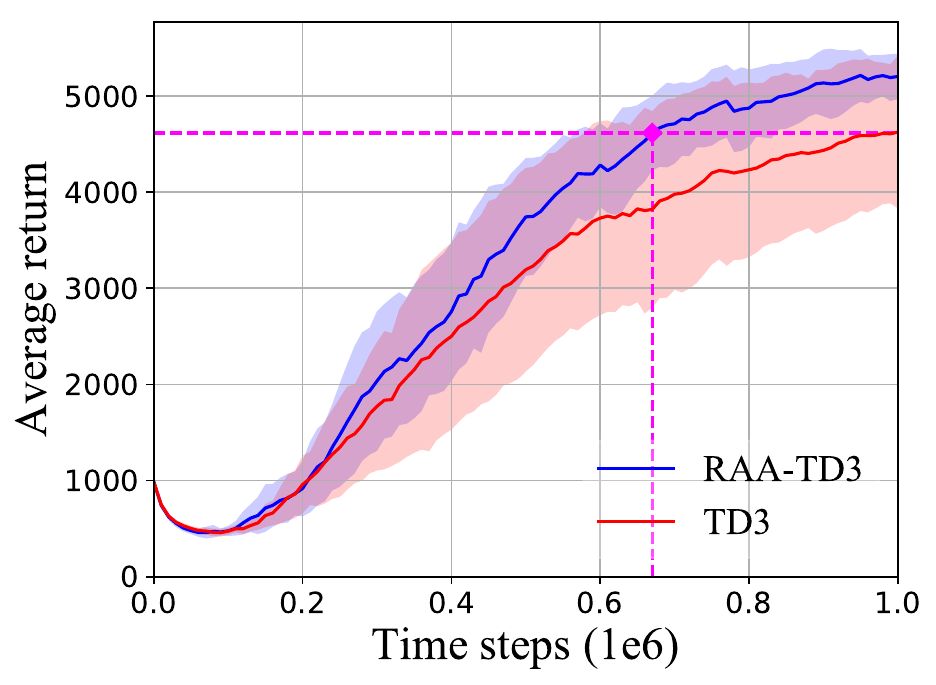}
   \end{minipage}
  }
  \hspace{0.05in}
  \subfigure[\scriptsize Hopper-v2]{
   \begin{minipage}[t]{0.22\linewidth}
    \centering
    \includegraphics[width=1.3in]{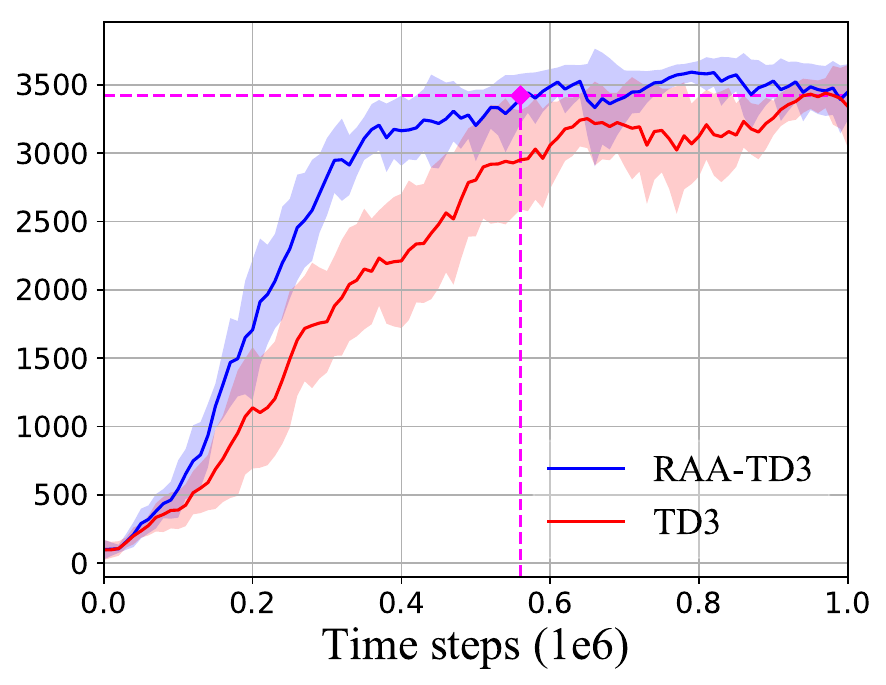}
   \end{minipage}
  }
  \subfigure[\scriptsize Walker2d-v2]{
   \begin{minipage}[t]{0.22\linewidth}
    \centering
    \includegraphics[width=1.3in]{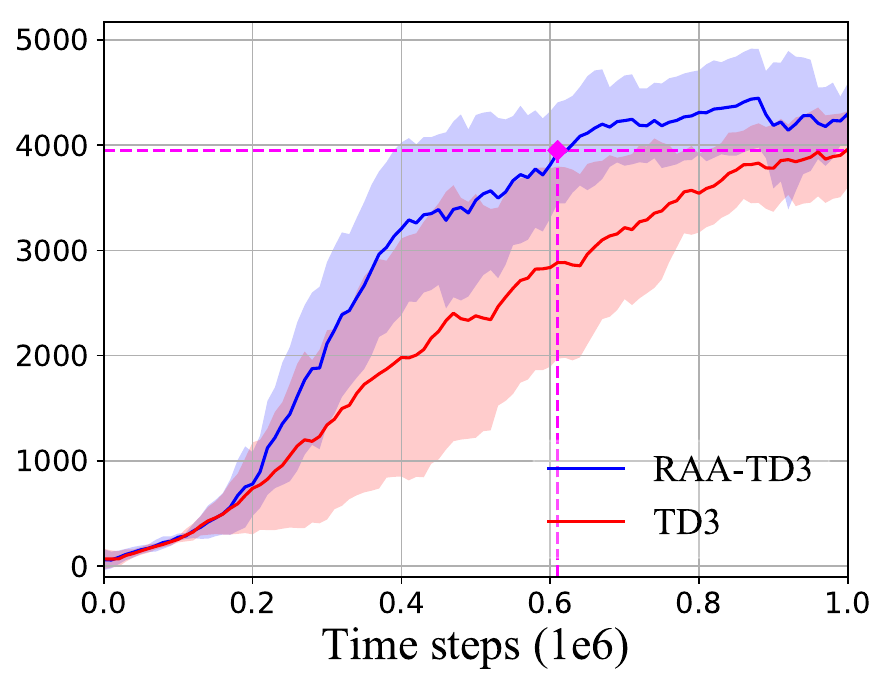}
   \end{minipage}
  }
  \subfigure[\scriptsize HalfCheetah-v2]{
   \begin{minipage}[t]{0.22\linewidth}
    \centering
    \includegraphics[width=1.33in]{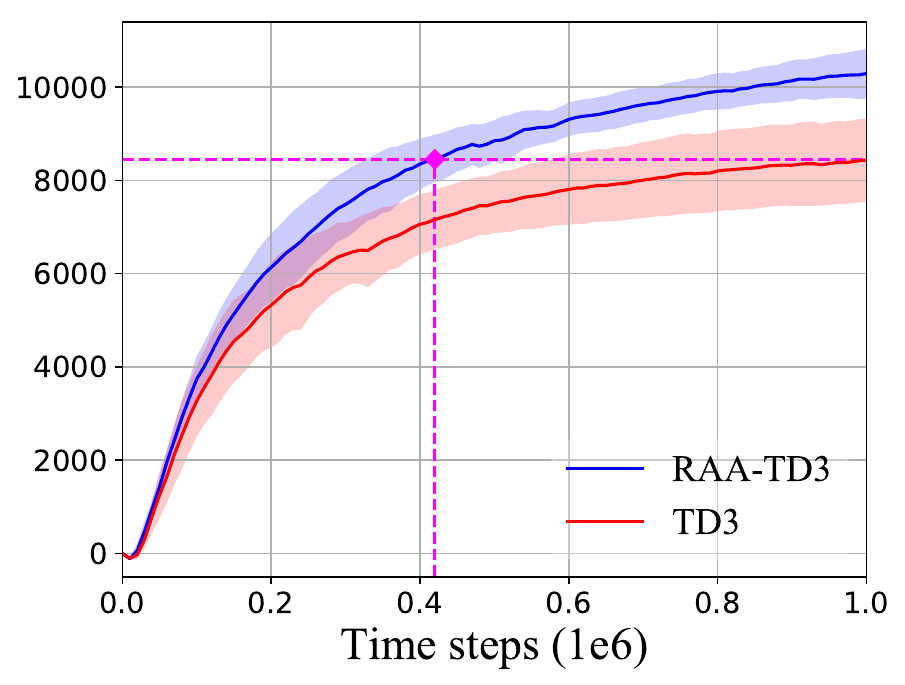}
   \end{minipage}
  }
  \centering
  \captionsetup{font={footnotesize}} 
  \caption{Learning Curves of Dueling-DQN, TD3 and their RAA variants on discrete and continuous control tasks. The solid curves correspond to the mean and the shaded region to the standard deviation over several trials. Curves are smoothed uniformly for visual clarity.}\vspace{-2.0ex}
  \label{fig:comparation}
 \end{figure}

\subsection{Experimental setup}\vspace{-0.5ex}
 \paragraph{Atari 2600.} For discrete control tasks, we perform experiments in the Arcade Learning Environment. We select four games (Breakout, Enduro, Qbert and SpaceInvaders) varying in their difficulty of convergence. The agent receives $84 \times 84 \times 4$ stacked grayscale images as inputs, as described in \cite{mnih2015human}.

\vspace{-0.3cm}
 \paragraph{MuJoCo.} For continuous control tasks, we conduct experiments in environments built on the MuJoCo physics engine. We select a number of control tasks to evaluate the performance of the proposed methodology and the baseline methods. In each task, the agent takes a vector of physical states as input, and generates an action to manipulate the robots in the environment.

\subsection{Comparative evaluation}\vspace{-0.5ex}
 To evaluate our RAA variant method, we select Dueling-DQN and TD3 as the baselines for discrete and continuous control tasks, respectively. Please note that we do not select DDPG as the baseline for continuous control tasks, as DDPG shows bad performance in difficult control tasks such as robotic manipulation. Figure \ref{fig:comparation} shows the total average return of evaluation rollouts during training for Dueling-DQN, TD3 and their RAA variants. We train five and seven different instances of each algorithm for Atari 2600 and MuJoCo, respectively. Besides, each baseline and corresponding RAA variant are trained with same random seeds set and evaluated every 10000 environment steps, where each evaluation reports the average return over ten different rollouts.

\vspace{-0.08cm}
 The results in Figure \ref{fig:comparation} show that, overall, RAA variants outperform to corresponding baseline on most tasks with a large margin such as HalfCheetah-v2 and perform comparably to them on the easier tasks such as Enduro in terms of learning speed, which indicate that RAA is a feasible method to make existing off-policy RL algorithms more sample efficient. In addition to the direct benefit of acceleration mentioned above, we also observe that our RAA variants demonstrate superior or comparable final performance to the baseline methods in all tasks. In fact, RAA-Dueling-DQN can be seen as a weighted variant of Average-DQN \cite{anschel2017averaged}, which can effectively reduce the variance of approximation error in the target values and thus shows improved performance. In summary, our approach brings an improvement in both the learning speed and final performance.

\subsection{Ablation studies}\vspace{-1.5ex}
 The results in the previous section suggest that our RAA method can improve the sample efficiency of existing off-policy RL algorithms. In this section, we further examine how sensitive our approach is to the scaling of regularization. We also perform ablation studies to understand the contribution of each individual component: progressive update and adaptive restart. Additionally, we analyze the impact of different number of previous estimates $m$ and compare the behavior of our proposed RAA method over different learning rates.

\vspace{-0.3cm}
 \paragraph{Regularization scale.} Our approach is sensitive to the scaling of regularization $\lambda$, because it control the norm of the coefficient vector and reduces the impact of approximation error. According to the conclusions of Proposition \ref{pro:bound}, larger regularization magnitude implies less impact of approximation error, but overlarge regularization will make the coefficients nearly identical and thus result in substantial degradation of acceleration performance. Figure \ref{fig:regularization} shows how learning performance changes on discrete control tasks when the regularization scale is varied, and consistent conclusion as above can be drawn from Figure \ref{fig:regularization}. For continuous control tasks, it is difficult to obtain same conclusion due to the dominant effect of bias induced by sampling a minibatch relative to function approximation errors. Additional learning curves on continuous control tasks can be found in Appendix \ref{app:Additional results} of the supplementary material.
 \begin{figure}[!h]
  \vspace{-0.4cm}
  \setlength{\abovecaptionskip}{-0.05cm}
  \setlength{\belowcaptionskip}{-0.4cm}
  \centering
  \subfigure[\scriptsize Breakout]{
   \begin{minipage}[t]{0.22\linewidth}
    \centering
    \includegraphics[width=1.35in]{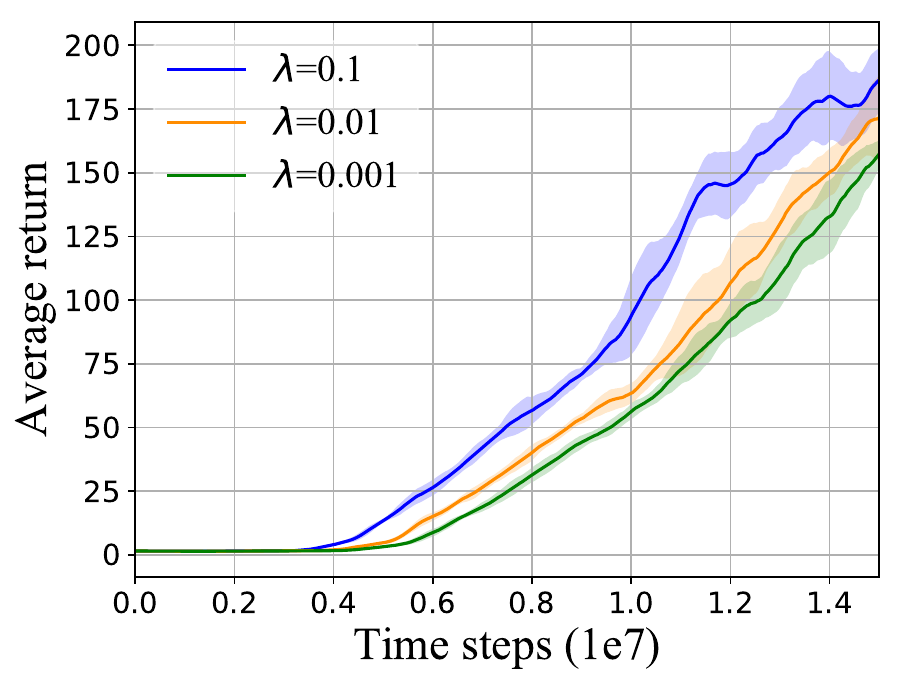}
   \end{minipage}
  }
  \hspace{0.05in}
  \subfigure[\scriptsize Enduro]{
   \begin{minipage}[t]{0.22\linewidth}
    \centering
    \includegraphics[width=1.28in]{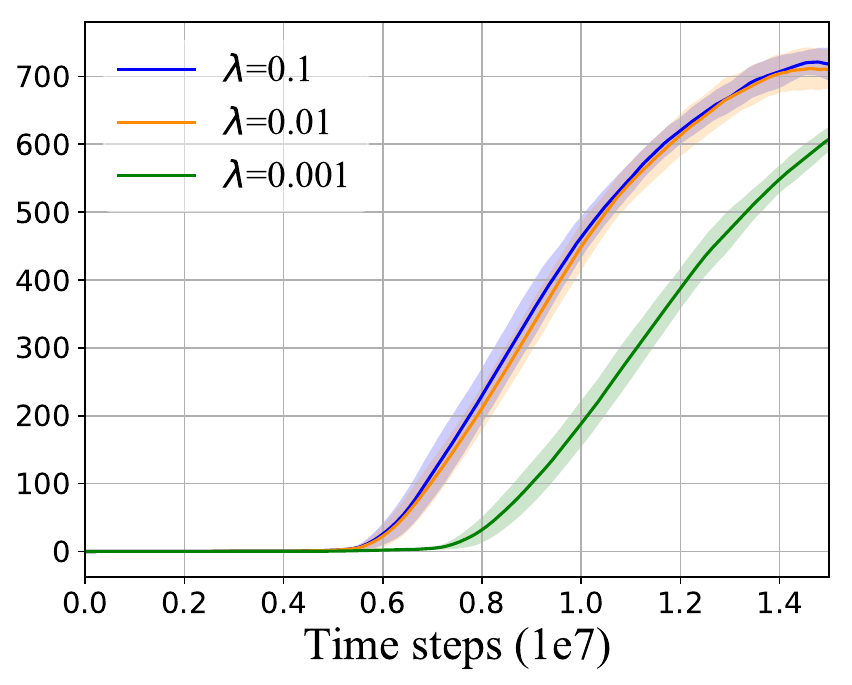}
   \end{minipage}
  }
  \subfigure[\scriptsize Qbert]{
   \begin{minipage}[t]{0.22\linewidth}
    \centering
    \includegraphics[width=1.31in]{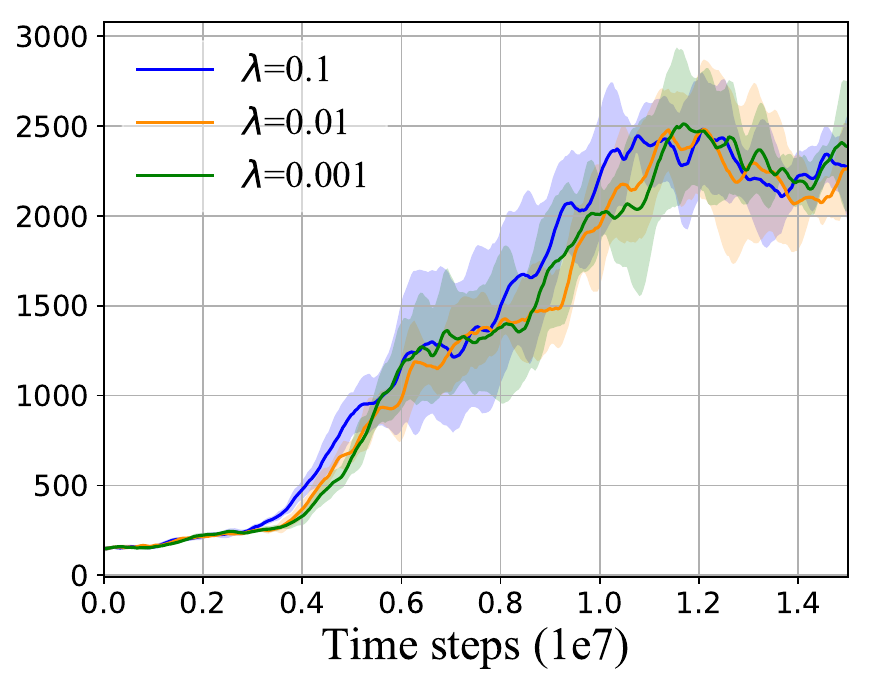}
   \end{minipage}
  }
  \subfigure[\scriptsize SpaceInvaders]{
   \begin{minipage}[t]{0.22\linewidth}
    \centering
    \includegraphics[width=1.28in]{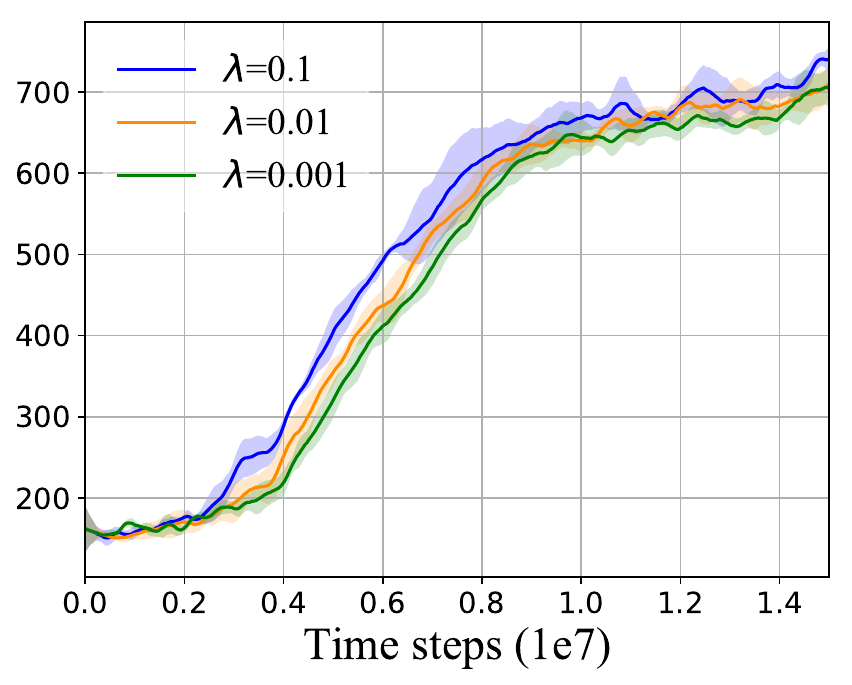}
   \end{minipage}
  }
  \centering
  \captionsetup{font={footnotesize}} 
  \caption{Sensitivity of RAA-Dueling-DQN to the scaling of regularization on discrete control tasks.}
  \label{fig:regularization}
 \end{figure}

\vspace{-0.3cm}
 \paragraph{Progressive update and adaptive restart.} This experiment compares our proposed approach with: (i) RAA without using progressive update (no progressive); (ii) RAA without adding adaptive restart (no restart); (iii) RAA without using progressive update and adding adaptive restart (no progressive and no restart). Figure \ref{fig:ablation} shows comparative learning curves on continuous control tasks. Although the significance of each component varies task to task, we see that using progressive update is essential for reducing the variance on all four tasks, consistent conclusion can also be drawn from Figure \ref{fig:comparation}. Moreover, adding adaptive restart marginally improves the performance. Additional results on discrete control tasks can be found in Appendix \ref{app:Additional results} of the supplementary material.
 \begin{figure}[!h]
  \vspace{-0.4cm}
  \setlength{\abovecaptionskip}{-0.05cm}
  \setlength{\belowcaptionskip}{-0.4cm}
  \centering
  \subfigure[\scriptsize Ant-v2]{
   \begin{minipage}[t]{0.22\linewidth}
    \centering
    \includegraphics[width=1.37in]{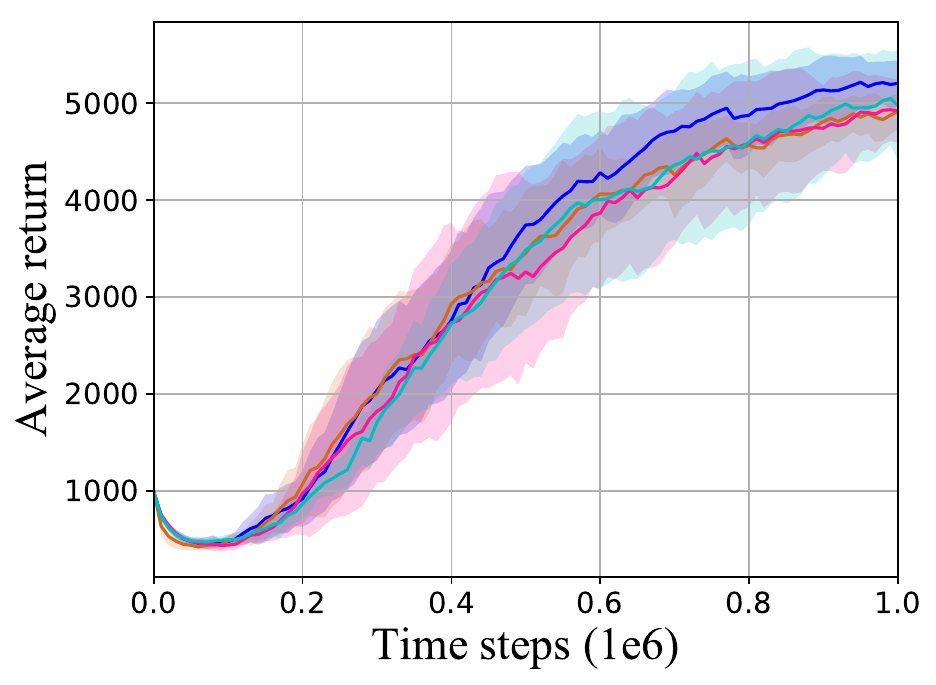}
   \end{minipage}
  }
  \hspace{0.05in}
  \subfigure[\scriptsize Hopper-v2]{
   \begin{minipage}[t]{0.22\linewidth}
    \centering
    \includegraphics[width=1.3in]{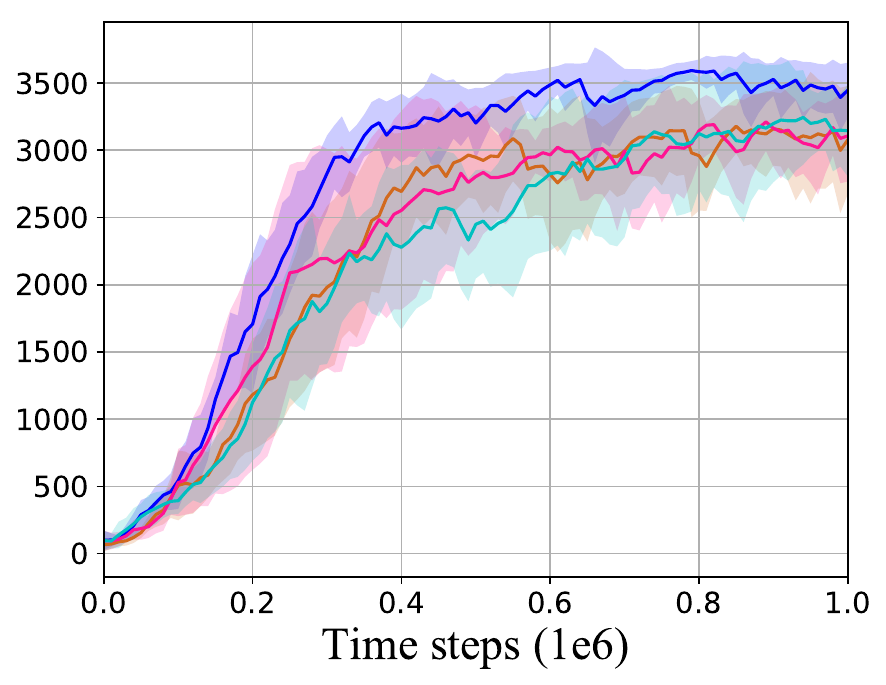}
   \end{minipage}
  }
  \subfigure[\scriptsize Walker2d-v2]{
   \begin{minipage}[t]{0.22\linewidth}
    \centering
    \includegraphics[width=1.3in]{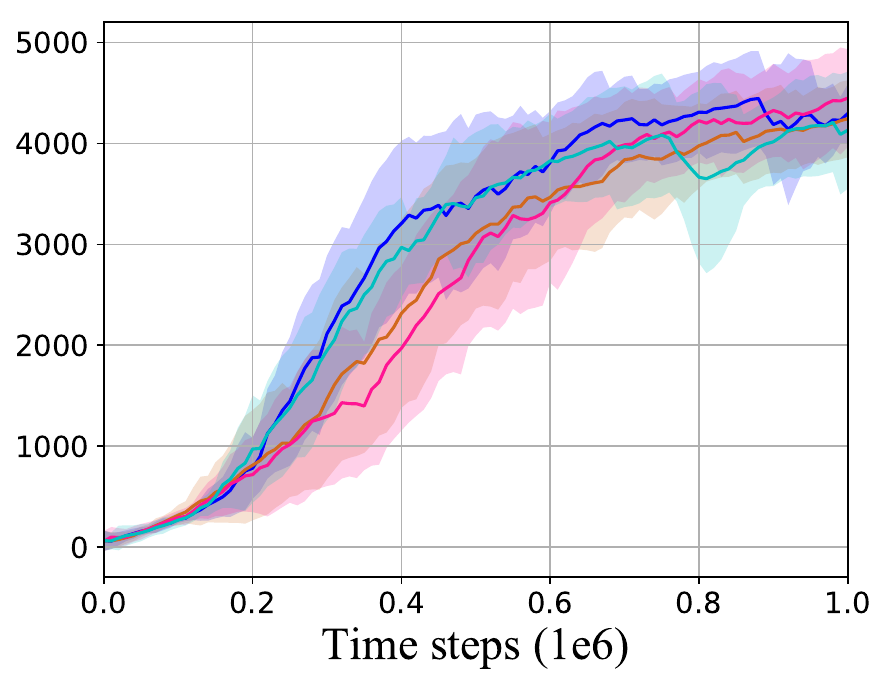}
   \end{minipage}
  }
  \subfigure[\scriptsize HalfCheetah-v2]{
   \begin{minipage}[t]{0.22\linewidth}
    \centering
    \includegraphics[width=1.33in]{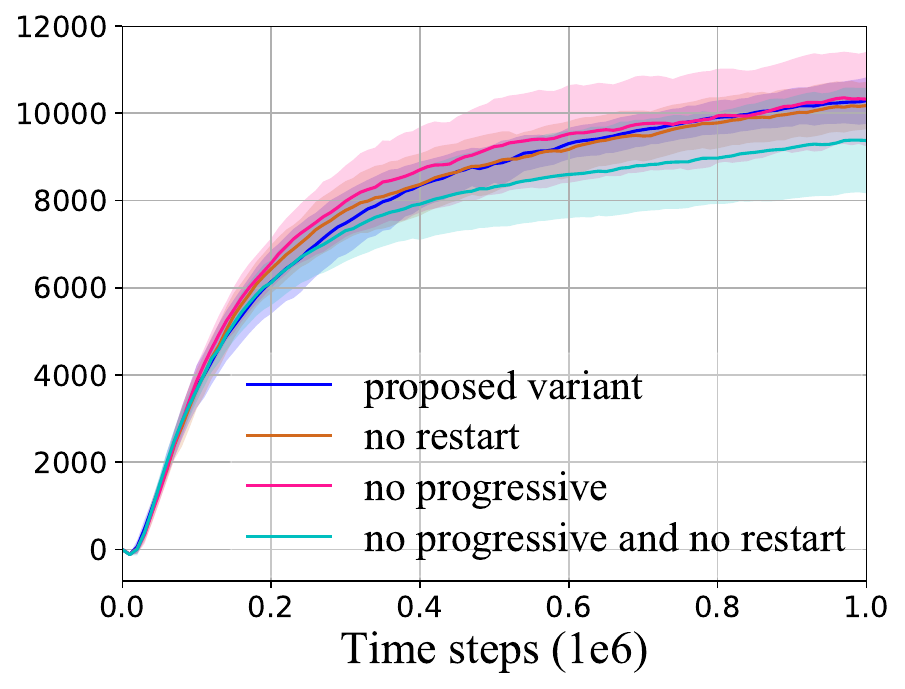}
   \end{minipage}
  }
  \centering
  \captionsetup{font={footnotesize}} 
  \caption{Ablation analysis of RAA-TD3 (blue) over progressive update and adaptive restart.}
  \label{fig:ablation}
 \end{figure}

 \begin{wrapfigure}{r}{0.4\textwidth}
  \vspace{-5pt}
  \begin{center}
   \includegraphics[width=0.33\textwidth]{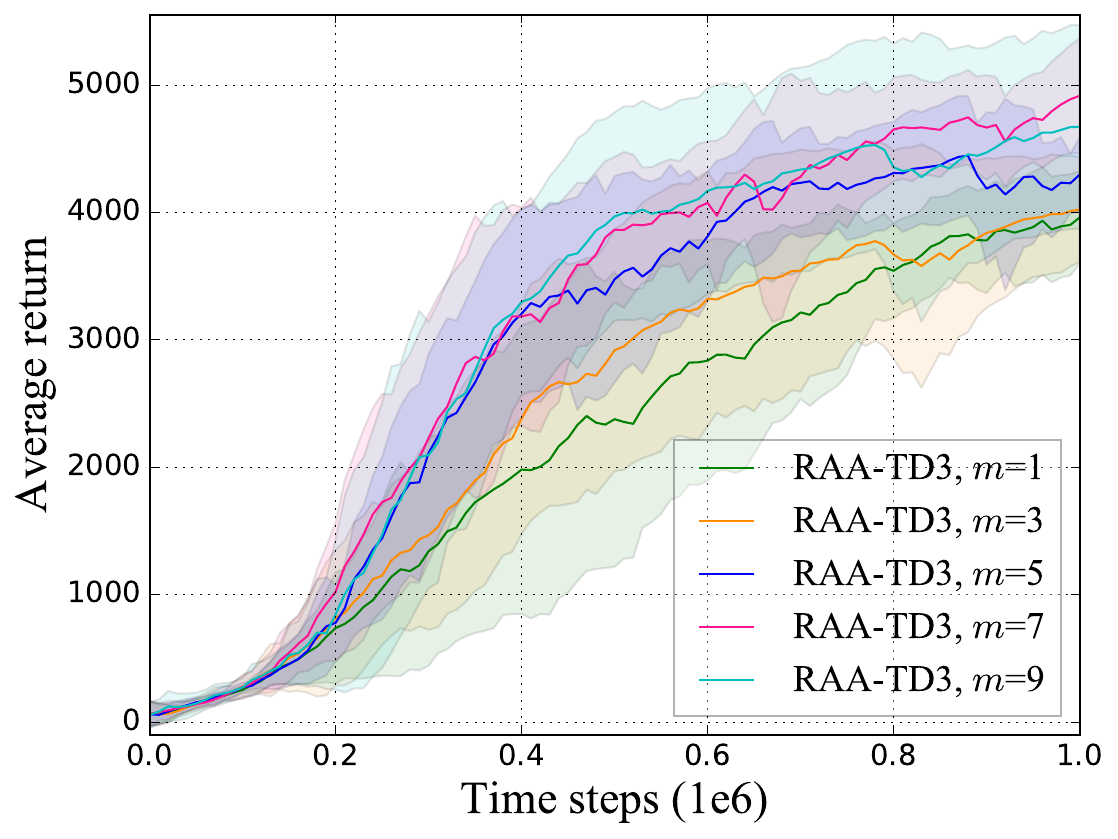}
  \end{center}
  \vspace{-13pt}
  \captionsetup{font={footnotesize}}
  \caption{Learning Curves of RAA-TD3 on Walker2d-v2 with different $m$.}
  \label{fig:walker2d_num}
 \end{wrapfigure}
 \vspace{-0.8ex}
 \paragraph{The number of previous estimates $m$.} In our experiments, the number of previous estimates $m$ is set to 5. In fact, there is a tradeoff between performance and computational cost. Fig.\ref{fig:walker2d_num} shows the results of RAA-TD3 using different $m (m=1,3,5,7,9)$ on Walker2d task. Overall, we can conclude that larger $m$ leads to faster convergence and better final performance, but the improvement becomes small when $m$ exceeds a threshold. In practice, we suggest to take into account available computing resource and sample efficiency when applying our proposed RAA method to other works.

 \begin{wrapfigure}{r}{0.4\textwidth}
  \vspace{-20pt}
  \begin{center}
   \includegraphics[width=0.33\textwidth]{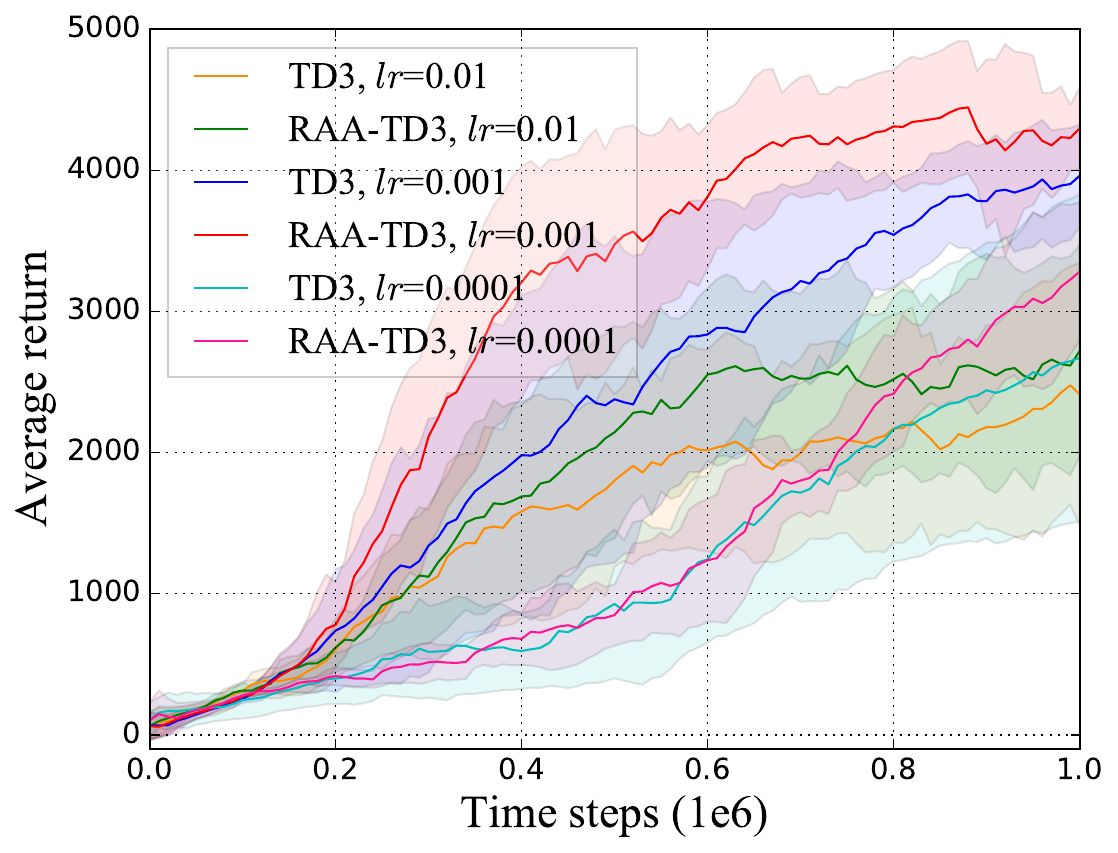}
  \end{center}
  \vspace{-13pt}
  \captionsetup{font={footnotesize}}
  \caption{Performance comparison on Walker2d-v2 with different learning rates.}
  \label{fig:walker2d_lr}
 \end{wrapfigure}
 \vspace{-0.8ex}
 \paragraph{Learning rate.} To compare the behavior of our proposed RAA method over different learning rates ($lr$), we perform additional experiments on Walker2d task, and the results of TD3 and our RAA-TD3 are shown in Fig.\ref{fig:walker2d_lr}. Overall, the improvement of our method is consistent across all learning rates, though the performance of both TD3 and our RAA-TD3 is bad under the setting with non-optimal learning rates, and the improvement is more significant when the learning rate is smaller. Moreover, consistent improvement of performance means that our proposed RAA method is effective and robust.

\section{Conclusion}
 In this paper, we presented a general acceleration method for existing deep reinforcement learning (RL) algorithms. The main idea is drawn from regularized Anderson acceleration (RAA), which is an effective approach to speeding up the solving of fixed point problems with perturbations. Our theoretical results explain that vanilla Anderson acceleration can be directly applied to policy iteration under a tabular setting. Furthermore, RAA is extended to model-free deep RL by introducing an additional regularization term. Two rigorous bounds about coefficient vector demonstrate that the regularization term controls the norm of the coefficient vector produced by RAA and reduces the impact of perturbation induced by function approximation errors. Moreover, we verified that the proposed method can significantly accelerate off-policy deep RL algorithms such as Dueling-DQN and TD3. The ablation studies show that progressive update and adaptive restart strategies can enhance the performance. For future work, how to combine Anderson acceleration or its variants with on-policy deep RL is an exciting avenue.


\section*{Acknowledgments}
 Gao Huang is supported in part by Beijing Academy of Artificial Intelligence (BAAI) under grant BAAI2019QN0106 and Tencent AI Lab Rhino-Bird Focused Research Program under grant JR201914. This research is supported by the National Science Foundation of China (NSFC) under grant 41427806.

\bibliographystyle{IEEEtran}
\medskip
\small
\bibliography{IEEEabrv,neurips_2019}

\newpage
\setcounter{page}{1}
\appendix
\begin{spacing}{2.0}
\rule{\textwidth}{1mm}
\end{spacing}
\centerline{\huge{\textbf{Supplementary Material}}}
\rule{\textwidth}{0.35mm}

\small

\section{Proofs}
\subsection{Solution to Anderson Acceleration}\label{app:solution}
\begin{proof}
 Let $\mu$ be the dual variable of the equality constraint of (\ref{eqn:objective}). Both $\mathbf{\alpha}^k$ and $\mu^k$ should satisfy the Karush-Kuhn-Tucker (KKT) system
 \begin{equation}\label{eqn:KKT for AA}
  \left[
   \begin{array}{cc}
    2\Delta_k^T\Delta_k & \mathbf{1} \\
    \mathbf{1}^T        & 0
   \end{array}
  \right]
  \left[
   \begin{array}{cc}
    \mathbf{\alpha}^k \\
    \mu^k
   \end{array}
  \right]
  =
  \left[
   \begin{array}{cc}
    \mathbf{0} \\
    1
   \end{array}
  \right].
 \end{equation}
 This block matrix can be inverted explicitly, with
 \begin{equation}
  \left[
   \begin{array}{cc}
    2\Delta_k^T\Delta_k & \mathbf{1} \\
    \mathbf{1}^T        & 0
   \end{array}
  \right]^{-1}
  =
  \frac{1}{\mathbf{1}^T(\Delta_k^T\Delta_k)^{-1}\mathbf{1}}
  \left[
   \begin{array}{cc}
    \frac{1}{2}(\Delta_k^T\Delta_k)^{-1}Y_k & (\Delta_k^T\Delta_k)^{-1}\mathbf{1} \\
    \mathbf{1}^T(\Delta_k^T\Delta_k)^{-1}   & -2
   \end{array}
  \right],
 \end{equation}
 where $Y_k = \mathbf{1}^T(\Delta_k^T\Delta_k)^{-1}\mathbf{1}I - \mathbf{1}I^T(\Delta_k^T\Delta_k)^{-1}$. Using this inverse we easily solve the linear system, which gives the result in (\ref{eqn:solution}).
\end{proof}

\subsection{Proof to Proposition 1}\label{app:proposition}
\begin{proof}
 We begin by the bound on $\widetilde{\mathbf{\alpha}}^k$. Indeed, with (\ref{eqn:regularized alpha}),
 \begin{align}
  \|\widetilde{\mathbf{\alpha}}^k\|^2 &= \frac{\mathbf{1}^T(\widetilde{\Delta}_k^T\widetilde{\Delta}_k+\lambda I)^{-2}\mathbf{1}}{
  (\mathbf{1}^T(\widetilde{\Delta}_k^T\widetilde{\Delta}_k+\lambda I)^{-1}\mathbf{1})^2} \\
  &\leq \frac{1}{m}\max\limits_{\|v\|=1}
  \frac{v^T(\widetilde{\Delta}_k^T\widetilde{\Delta}_k+\lambda I)^{-2}v}{
  (v^T(\widetilde{\Delta}_k^T\widetilde{\Delta}_k+\lambda I)^{-1}v)^2} \\
  &= \frac{1}{m}\max\limits_{\|v\|=1}
  \frac{\|(\widetilde{\Delta}_k^T\widetilde{\Delta}_k+\lambda I)^{-\frac{1}{2}} (\widetilde{\Delta}_k^T\widetilde{\Delta}_k+\lambda I)^{-\frac{1}{2}}v\|^2}{
  \|(\widetilde{\Delta}_k^T\widetilde{\Delta}_k+\lambda I)^{-\frac{1}{2}}v\|^4} \\
  &\leq \frac{1}{m}\|(\widetilde{\Delta}_k^T\widetilde{\Delta}_k+\lambda I)^{-\frac{1}{2}}\|^2
  \max\limits_{\|v\|=1}\frac{1}{
  \|(\widetilde{\Delta}_k^T\widetilde{\Delta}_k+\lambda I)^{-\frac{1}{2}}v\|^2} \\
  &= \frac{1}{m}\|(\widetilde{\Delta}_k^T\widetilde{\Delta}_k+\lambda I)^{-\frac{1}{2}}\|^2
  \|(\widetilde{\Delta}_k^T\widetilde{\Delta}_k+\lambda I)^{\frac{1}{2}}\|^2 \\
  &\leq \frac{\lambda+\|\widetilde{\Delta}_k\|^2}{m\lambda},
 \end{align}
 where the last inequality is because $\widetilde{\Delta}_k^T\widetilde{\Delta}_k\geq 0$, we have $(\widetilde{\Delta}_k^T\widetilde{\Delta}_k+\lambda I)\geq \lambda I$.

 We will bound $\widetilde{\mathbf{\alpha}}^k-\mathbf{\alpha}^k$ from now on. Let $\widetilde{\mu}^k$ be the dual variable of the equality constraint in (\ref{eqn:regularized objective}), then $\widetilde{\mathbf{\alpha}}^k$ and $\widetilde{\mu}^k$ should satisfy the KKT system
 \begin{equation}\label{eqn:KKT for RAA}
  \left[
   \begin{array}{cc}
    2(\widetilde{\Delta}_k^T\widetilde{\Delta}_k+\lambda I) & \mathbf{1} \\
    \mathbf{1}^T                                            & 0
   \end{array}
  \right]
  \left[
   \begin{array}{cc}
    \widetilde{\mathbf{\alpha}}^k \\
    \widetilde{\mu}^k
   \end{array}
  \right]
  =
  \left[
   \begin{array}{cc}
    \mathbf{0} \\
    1
   \end{array}
  \right].
 \end{equation}

 Expanding the LHS of (\ref{eqn:KKT for RAA}), we obtain
 \begin{equation}\label{eqn:expansion of KTT for RAA}
 \begin{split}
  \left[
   \begin{array}{cc}
    2(\widetilde{\Delta}_k^T\widetilde{\Delta}_k+\lambda I) & \mathbf{1} \\
    \mathbf{1}^T                                            & 0
   \end{array}
  \right]
  \left[
   \begin{array}{cc}
    \widetilde{\mathbf{\alpha}}^k \\
    \widetilde{\mu}^k
   \end{array}
  \right]
  =&
  \left[
   \begin{array}{cc}
    2\Delta_k^T\Delta_k & \mathbf{1} \\
    \mathbf{1}^T        & 0
   \end{array}
  \right]
  \left[
   \begin{array}{cc}
    \mathbf{\alpha}^k \\
    \mu^k
   \end{array}
  \right]
  +
  \left[
   \begin{array}{cc}
    2\Delta_k^T\Delta_k & \mathbf{1} \\
    \mathbf{1}^T        & 0
   \end{array}
  \right]
  \left[
   \begin{array}{cc}
    \widetilde{\mathbf{\alpha}}^k-\mathbf{\alpha}^k \\
    \widetilde{\mu}^k-\mu^k
   \end{array}
  \right] \\
  &+
  \left[
   \begin{array}{cc}
    2(\widetilde{\Delta}_k^T\widetilde{\Delta}_k+\lambda I-\Delta_k^T\Delta_k) & \mathbf{0} \\
    \mathbf{0}^T                                                               & 0
   \end{array}
  \right]
  \left[
   \begin{array}{cc}
    \widetilde{\mathbf{\alpha}}^k \\
    \widetilde{\mu}^k
   \end{array}
  \right].
 \end{split}
 \end{equation}

 Using the condition (\ref{eqn:KKT for RAA}) and (\ref{eqn:KKT for AA}), the system becomes
 \begin{equation}
  \left[
   \begin{array}{cc}
    2(\widetilde{\Delta}_k^T\widetilde{\Delta}_k+\lambda I) & \mathbf{1} \\
    \mathbf{1}^T                                            & 0
   \end{array}
  \right]
  \left[
   \begin{array}{cc}
    \widetilde{\mathbf{\alpha}}^k-\mathbf{\alpha}^k \\
    \widetilde{\mu}^k-\mu^k
   \end{array}
  \right]
  =
  -\left[
   \begin{array}{cc}
    2(\widetilde{\Delta}_k^T\widetilde{\Delta}_k+\lambda I-\Delta_k^T\Delta_k)\mathbf{\alpha}^k \\
    0
   \end{array}
  \right].
 \end{equation}

 The explicit solution is obtained by inverting the block matrix, and is written
 \begin{align}
  \widetilde{\mathbf{\alpha}}^k-\mathbf{\alpha}^k = -\left(I- \frac{(\widetilde{\Delta}_k^T\widetilde{\Delta}_k+\lambda I)^{-1}\mathbf{1}\mathbf{1}^T}{
  \mathbf{1}^T(\widetilde{\Delta}_k^T\widetilde{\Delta}_k+\lambda I)^{-1}\mathbf{1}}\right)
  (\widetilde{\Delta}_k^T\widetilde{\Delta}_k+\lambda I)^{-1} (\widetilde{\Delta}_k^T\widetilde{\Delta}_k+\lambda I-\Delta_k^T\Delta_k)\mathbf{\alpha}^k.
 \end{align}

 Then, we can bound the norm of $\widetilde{\mathbf{\alpha}}^k-\mathbf{\alpha}^k$ by
 \begin{align}
  \|\widetilde{\mathbf{\alpha}}^k-\mathbf{\alpha}^k\| &\leq \left\|I- \frac{(\widetilde{\Delta}_k^T\widetilde{\Delta}_k+\lambda I)^{-1}\mathbf{1}\mathbf{1}^T}{
  \mathbf{1}^T(\widetilde{\Delta}_k^T\widetilde{\Delta}_k+\lambda I)^{-1}\mathbf{1}}\right\|
  \left\|(\widetilde{\Delta}_k^T\widetilde{\Delta}_k+\lambda I)^{-1}\right\| \left\|(\widetilde{\Delta}_k^T\widetilde{\Delta}_k+\lambda I-\Delta_k^T\Delta_k)\right\|\left\|\mathbf{\alpha}^k\right\| \\
  &\leq \left\|(\widetilde{\Delta}_k^T\widetilde{\Delta}_k+\lambda I)^{-1}\right\|
                  \left\|(\widetilde{\Delta}_k^T\widetilde{\Delta}_k+\lambda
                  I-\Delta_k^T\Delta_k)\right\|\left\|\mathbf{\alpha}^k\right\| \\
  &\leq \frac{\|\widetilde{\Delta}_k^T\widetilde{\Delta}_k-\Delta_k^T\Delta_k\|+\lambda}
                            {\lambda}\|\mathbf{\alpha}^k\|,
 \end{align}
 which is the desired result.
\end{proof}

\section{RAA-TD3}\label{app:RAA-TD3}
 \setlength{\algomargin}{0.7em}
 \begin{algorithm}[h]\label{alg:RAA-TD3}
  \caption{RAA-TD3 Algorithm}
  Initialize critic networks $Q_{\theta_1},Q_{\theta_2}$, and actor network $\pi_{\phi}$ with random parameters $\theta_1,\theta_2,\phi$\;
  Initialize target networks $\theta^i_j\leftarrow\theta_j$ $(i=1,...,m; j=1,2)$, $\phi^{\prime}\leftarrow\phi$\;
  Initialize replay buffer $\mathcal{D}$\;
  Initialize restart checking period $T_{r}$ and maximum training steps $K$\;
  Set $k=0$, $c_1 = 1$, $\Delta_{min}=\inf$, $\Delta_{T_r}=0$\;
  \While{k < K}
  {
  Receive initial observation state $s_0$\;
  \For{$t = 1$ to $T$}
  {
      Set $k=k+1$, and $m_k=\min(c_k,m)$\;
      Select action $a_t$ with exploration noise $a \sim \pi_{\phi}(s)+\epsilon$, $\epsilon \sim \mathcal{N}(0, \sigma)$\;
      Execute $a_t$, receive $r_t$ and $s_{t+1}$, store transition $(s_t, a_t, r_t, s_{t+1})$ into $\mathcal{D}$\;
      Sample minibatch of $N$ transitions $(s, a, r, s^\prime)$ from $\mathcal{D}$\;
      Perform Anderson acceleration steps (\ref{eqn:regularized objective})-(\ref{eqn:regularized alpha})and obtain $\widetilde{\mathbf{\alpha}}^k$, $\Delta_{T_r}=\Delta_{T_r}+\|\widetilde{\delta}_k\|^2_2$\;
      Update critic networks by minimizing the loss function (\ref{eqn:loss of critic}) with (\ref{eqn:target values for RAA-TD3})\;
      \If{$t\mod M=0$}
      {
        Update actor network by the deterministic policy gradient:\\
        ~~~~~~~~$\nabla_{\phi} J(\phi)=N^{-1} \sum \nabla_{a} Q_{\theta_1}\left.(s, a)\right|_{a=\pi_{\phi}(s)} \nabla_{\phi} \pi_{\phi}(s)$\;
        Update target networks:\\
        ~~~~~~~~$\theta^i_j\leftarrow\theta^{i+1}_j$, $\theta^m_j\leftarrow\tau\theta_j+(1-\tau) \theta^m_j$ $(i=1,...,m-1; j=1,2)$ \;
        ~~~~~~~~$\phi^{\prime} \leftarrow \tau \phi+(1-\tau) \phi^{\prime}$\;
      }

      $c_{k+1}=c_k+1$\;
      \If{$k\mod T_r=0$}
      {
       $\Delta_{min}=\min(\Delta_{min}, \Delta_{T_r}$)\;
       \If{$\Delta_{T_r}>\Delta_{min}$}
       {
         $\Delta_{min}=\inf$, and $c_{k+1}=1$\;
       }
      }
    }
  }
 \end{algorithm}

\newpage
\section{Hyperparameters}\label{app:hyperparameters}
\begin{table*}[h]
 \centering
 \caption{Hyperparameters used in Dueling-DQN and RAA-Dueling-DQN.}
 \begin{tabular}{l|l}
    \hline
    Hyperparameters & Value \\
    \hline
    \emph{Network} \\
        \qquad channels & 32, 64, 64 \\
        \qquad filter size & ${8\times8, 4\times4, 3\times3}$ \\
        \qquad stride & 4, 2, 1 \\
        \qquad Val: ~(hidden units, output units) & (512, 1) \\
        \qquad Adv: (hidden units, output units) & (512, action dimensions) \\
        \hline
    \emph{Shared} \\
        \qquad optimizer & RMSprop \\
        \qquad start time steps & 5 ${\times 10^4}$ \\
        \qquad discount factor & 0.99 \\
        \qquad replay buffer size & 10$^6$ \\
        \qquad batch size & 32 \\
        \qquad frames stacked & 4 \\
        \qquad action repetitions & 4 \\
        \qquad learning rate & 0.00025 \\
        \hline
    \emph{RAA-Dueling-DQN} \\
        \qquad progressive coefficient ($\beta$) & 0.05 \\
        \qquad sample size for RAA ($N_A$) & 128 \\
        \qquad regularization scale & 0.1 \\
        \qquad number of previous estimates & 5 \\
        \qquad target update interval & 2000 \\
        \hline
    \emph{Dueling-DQN} \\
        \qquad target update interval & 10000 \\
    \hline
 \end{tabular}
\end{table*}
\begin{table*}[h]
 \centering
 \caption{Hyperparameters used in TD3 and RAA-TD3.}
 \begin{tabular}{l|l}
    \hline
    Hyperparameters & Value \\
    \hline
    \emph{Network} \\
        \qquad Critic: hidden units & 400, 300 \\
        \qquad ~~~~~~~~~~~output units & 1 \\
        \qquad Actor: hidden units & 400, 300 \\
        \qquad ~~~~~~~~~~~output units & action dimensions \\
        \hline
    \emph{Shared} \\
        \qquad optimizer & Adam \\
        \qquad start time steps & 10$^4$ \\
        \qquad discount factor & 0.99 \\
        \qquad replay buffer size & 10$^6$ \\
        \qquad batch size & 100 \\
        \qquad exploration noise & 0.1 \\
        \qquad target update rate ($\tau$) & $5\times 10^{-3}$ \\
        \qquad actor update frequency & 2 \\
        \qquad exploration policy & $\mathcal{N}(0, 0.2)$ \\
        \hline
    \emph{RAA-TD3} \\
        \qquad progressive coefficient ($\beta$) & 0.1 \\
        \qquad sample size for RAA ($N_A$) & 400 \\
        \qquad regularization scale & 0.001 \\
        \qquad number of previous estimates & 5 \\
        \hline
    \emph{TD3} \\
    \hline
 \end{tabular}
\end{table*}

\newpage
\section{Additional Learning Curves}\label{app:Additional results}
  \begin{figure}[!ht]
  \centering
  \subfigure[\scriptsize Ant-v2]{
   \begin{minipage}[t]{0.22\linewidth}
    \centering
    \includegraphics[width=1.37in]{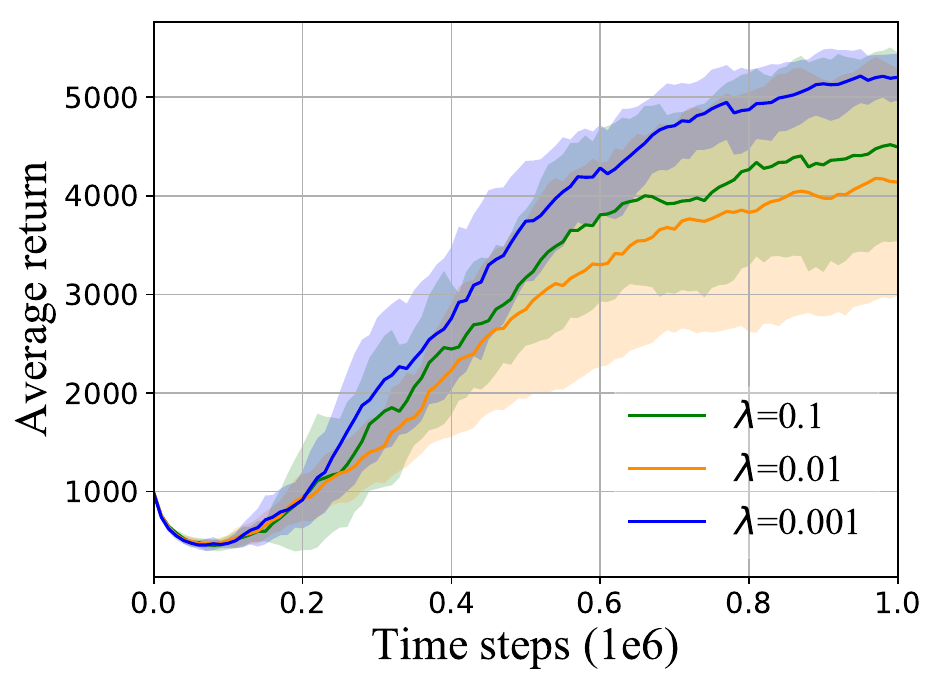}
   \end{minipage}
  }
  \hspace{0.05in}
  \subfigure[\scriptsize Hopper-v2]{
   \begin{minipage}[t]{0.22\linewidth}
    \centering
    \includegraphics[width=1.3in]{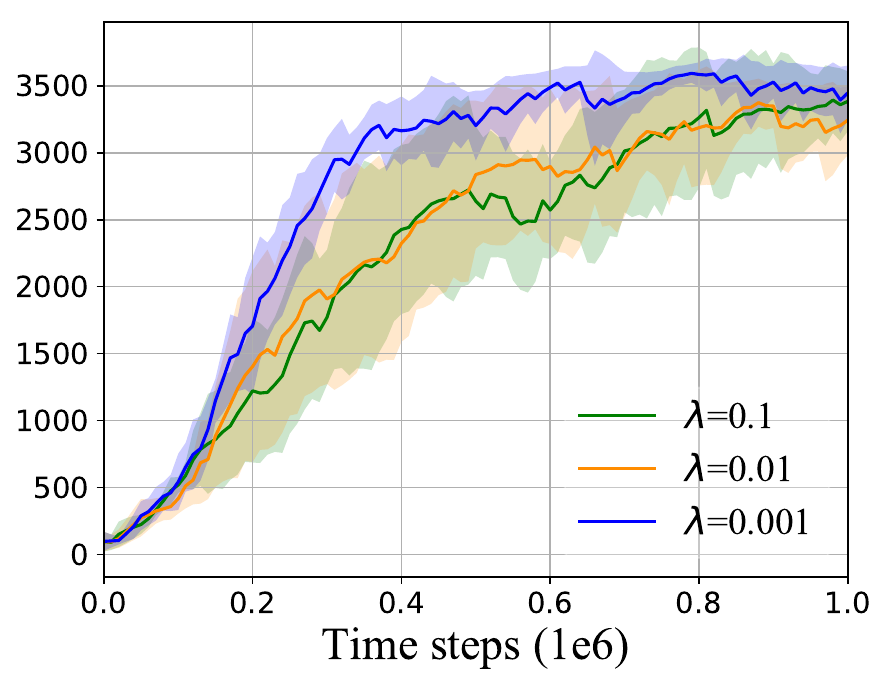}
   \end{minipage}
  }
  \subfigure[\scriptsize Walker2d-v2]{
   \begin{minipage}[t]{0.22\linewidth}
    \centering
    \includegraphics[width=1.3in]{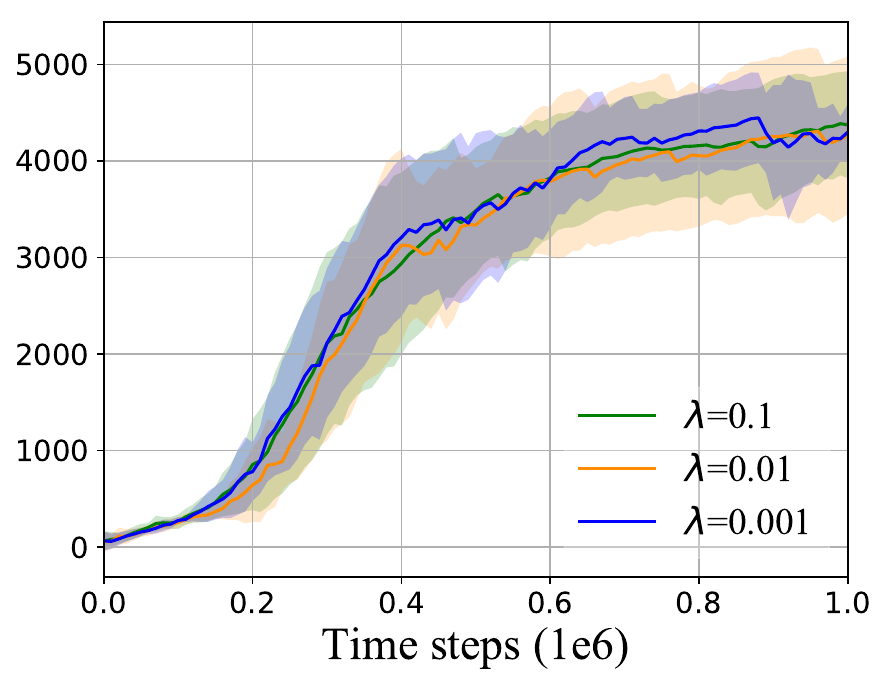}
   \end{minipage}
  }
  \subfigure[\scriptsize HalfCheetah-v2]{
   \begin{minipage}[t]{0.22\linewidth}
    \centering
    \includegraphics[width=1.33in]{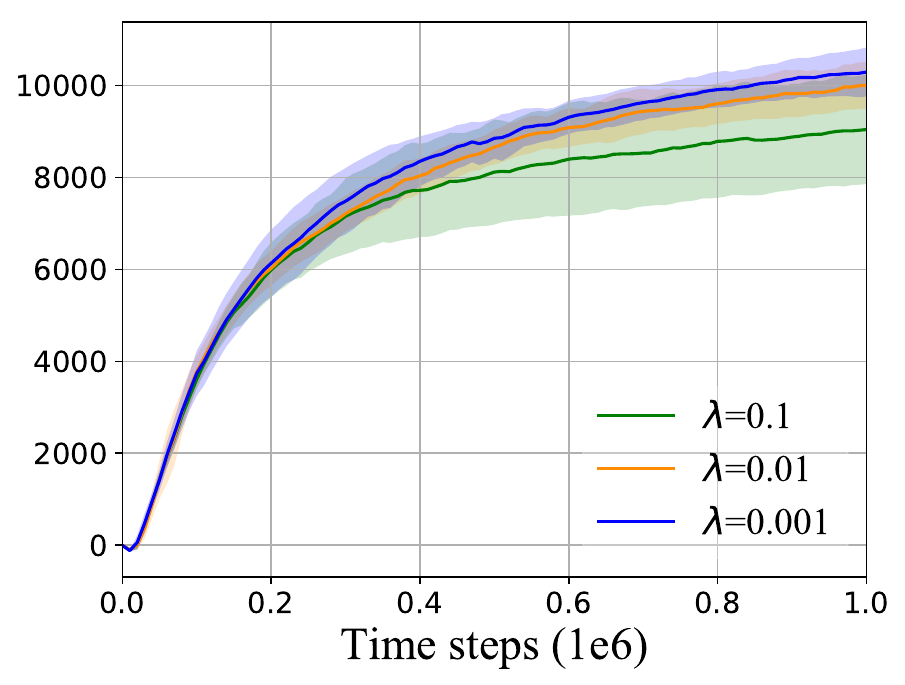}
   \end{minipage}
  }
  \centering
  \captionsetup{font={footnotesize}} 
  \caption{Sensitivity of RAA-TD3 to the scaling of regularization on continuous control tasks.}
 \end{figure}
 \begin{figure}[!ht]
  \centering
  \subfigure[\scriptsize Breakout]{
   \begin{minipage}[t]{0.22\linewidth}
    \centering
    \includegraphics[width=1.35in]{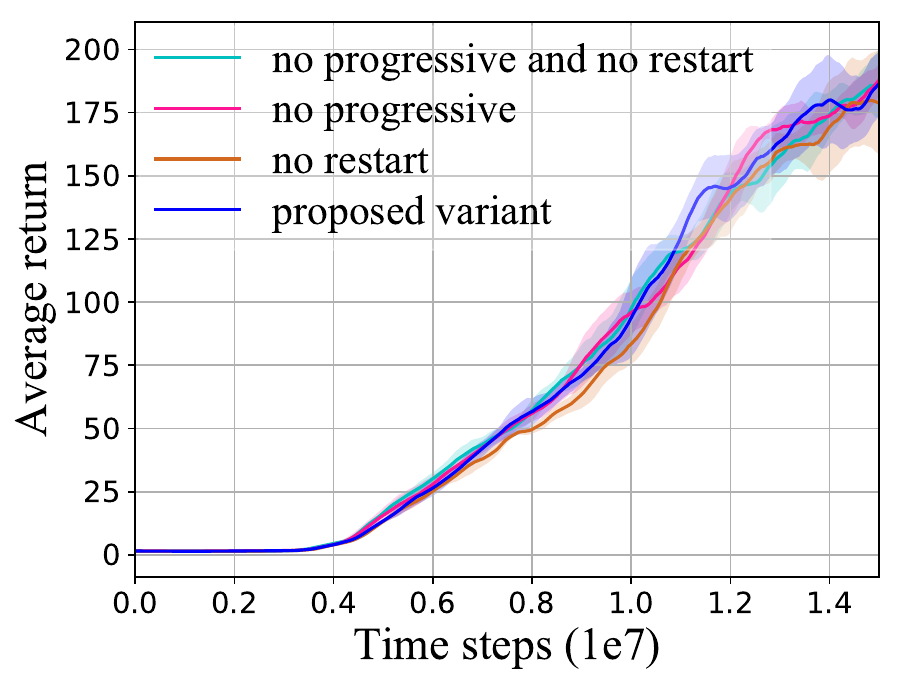}
   \end{minipage}
  }
  \hspace{0.05in}
  \subfigure[\scriptsize Enduro]{
   \begin{minipage}[t]{0.22\linewidth}
    \centering
    \includegraphics[width=1.28in]{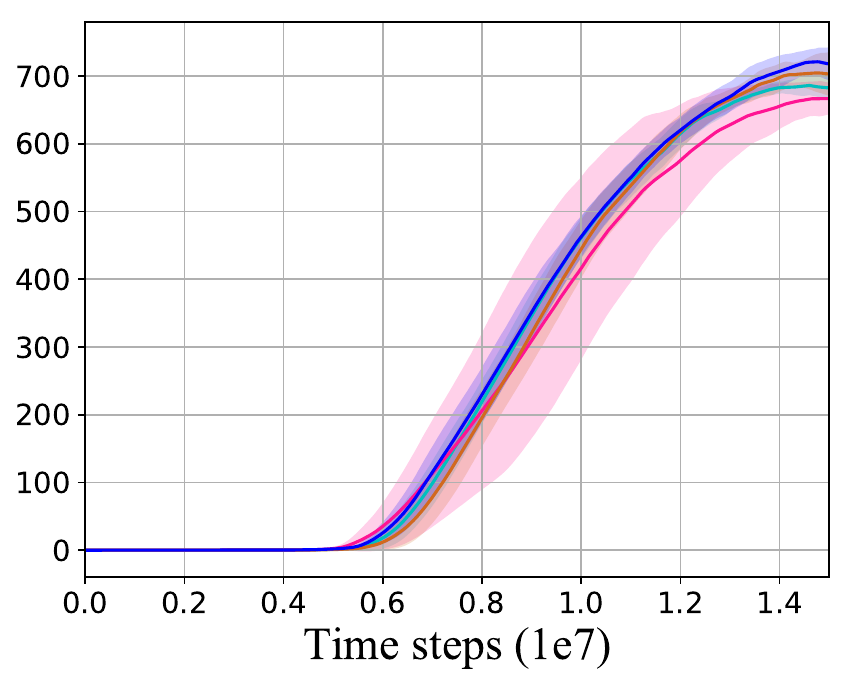}
   \end{minipage}
  }
  \subfigure[\scriptsize Qbert]{
   \begin{minipage}[t]{0.22\linewidth}
    \centering
    \includegraphics[width=1.31in]{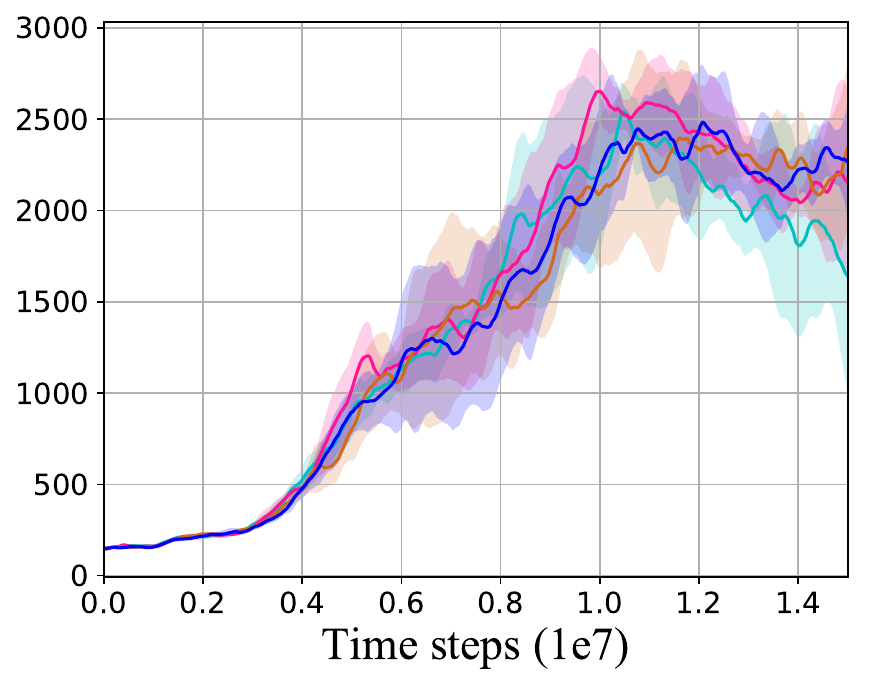}
   \end{minipage}
  }
  \subfigure[\scriptsize SpaceInvaders]{
   \begin{minipage}[t]{0.22\linewidth}
    \centering
    \includegraphics[width=1.28in]{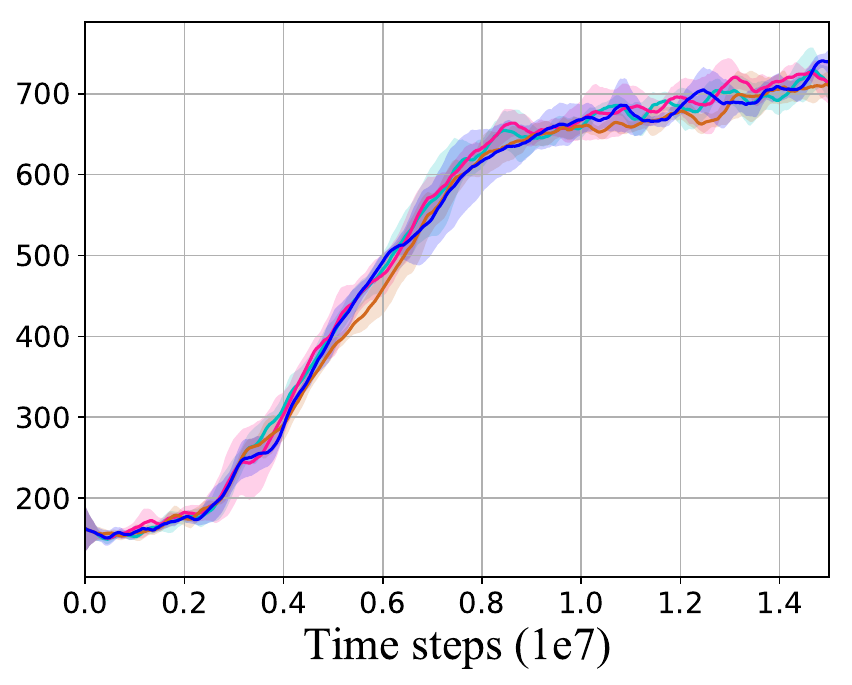}
   \end{minipage}
  }
  \centering
  \captionsetup{font={footnotesize}} 
  \caption{Ablation analysis of RAA-Dueling-DQN (blue) over progressive update and adaptive restart.}
 \end{figure}

\end{document}